%% file: root.tex
\documentclass[letterpaper, 10 pt, conference]{ieeeconf}  

\IEEEoverridecommandlockouts                              
\usepackage{graphics} 
\usepackage{epsfig} 
\usepackage{mathptmx} 
\usepackage{times} 
\usepackage{amsmath} 
\usepackage{amssymb}  

\usepackage{mathtools}

\usepackage{algorithm}
\usepackage{algorithmic}
\usepackage{xcolor}
\usepackage{hyperref}

\include{math_commands}

\newcommand{\clip}{\text{clip}}

\title{\LARGE \bf
Almost Sure Convergence Analysis of Stochastic Gradient Methods\\ with Clipping and Additive Noise
}

\author{Amartya Mukherjee and Jun Liu
\thanks{This work was supported in part by the Natural Sciences and Engineering Research Council of Canada and the Canada Research Chairs Program.}
\thanks{Amartya Mukherjee and Jun Liu are with the Department of Applied Mathematics, University of Waterloo, Waterloo, Ontario, Canada N2L 3G1 (email: {\tt\small (a29mukhe,j.liu)@uwaterloo.ca}).}%
}

\begin{document}

\maketitle

\begin{abstract}
Stochastic gradient descent (SGD) with gradient clipping and additive noise has become a standard technique for training machine learning models, particularly in applications requiring robustness or privacy guarantees.
However, clipping introduces a bias in stochastic gradients, while additive noise introduces additional variance, making the long-run behaviour of individual optimization trajectories difficult to characterize.
In this work, we prove that SGD with clipping and additive Gaussian noise (SGD-CN) converges almost surely (a.s.) under smoothness and uniformly bounded stochastic-gradient noise assumptions, provided the step sizes satisfy some standard decaying conditions.
Our analysis extends to momentum variants such as the stochastic heavy ball and Nesterov’s accelerated gradient, where we show that careful energy constructions yield similar guarantees.
These results provide stronger theoretical foundations for understanding the pathwise behaviour of clipped stochastic gradient methods and suggest that, despite the bias and noise introduced by clipping and perturbation, the algorithm remains stable in both convex and nonconvex regimes.
\end{abstract}

\section{Introduction}

Gradient clipping is widely used in the training of machine learning models to control large stochastic gradients and improve numerical stability. It also plays an important role in privacy-preserving optimization, where clipping bounds the sensitivity of individual gradient contributions before random noise is added \cite{dwork2006calibrating,abadi2016deep}. These considerations motivate stochastic gradient methods that combine gradient clipping with additive noise. We refer to this class of methods as stochastic gradient descent with clipping and additive noise (SGD-CN).

Despite its practical importance, clipping substantially changes the behaviour of stochastic gradient methods. In particular, even when a stochastic gradient is an unbiased estimator of the true gradient, its clipped version need not remain unbiased. The resulting bias can alter both the magnitude and direction of the expected update. Additive Gaussian perturbations introduce a second difficulty, since individual iterates continue to receive random disturbances throughout training. Correspondingly, standard convergence arguments for unbiased SGD do not directly apply to SGD-CN.

The convergence behaviour of gradient clipping has received significant attention in recent years. Several works analyze clipped stochastic gradient methods under assumptions on the stochastic-gradient noise or on the geometry of the clipping operation. Of particular relevance, \cite{koloskova2023revisiting} characterizes the stochastic bias introduced by gradient clipping and shows that, under a standard bounded-variance assumption alone, clipped SGD need not converge to an exact stationary point. Instead, the clipping bias can produce a nonvanishing neighborhood whose size depends on the stochastic-gradient variance and the clipping threshold. This observation highlights that additional structure on the stochastic-gradient oracle is required for exact convergence with a fixed clipping threshold. Related analyses of gradient clipping and privacy-preserving stochastic optimization include \cite{fang2023improved,tang2024dp}, while differentially private optimization has also been studied in distributed and multi-agent settings \cite{huang2024differential,chen2023differential,katewa2019differential}.

In this work, we study the almost sure convergence of SGD-CN under a uniformly bounded stochastic-gradient noise assumption. Specifically, we assume that the objective is smooth and that the stochastic-gradient noise is uniformly bounded. We show that, although clipping biases the stochastic gradient, its conditional mean remains a descent direction provided that the clipping threshold is sufficiently large relative to the stochastic-gradient noise.
Our convergence analysis builds on this observation. For SGD-CN and its momentum variants, we establish almost sure weighted summability of a clipping-aware stationarity measure, from which best-iterate convergence rates follow. We then strengthen these results to last-iterate convergence by controlling the oscillations of the gradient norm.

\section{Preliminaries and Assumptions}

We provide formal definitions and introduce some assumptions commonly used in the convergence analysis of SGD \cite{liu2022almost,doan2022finite}.

\textbf{Problem statement:} We are interested in solving the following unconstrained minimization problem
\begin{equation}
    \min_{\xv\in\Rb^d}f(\xv),
\end{equation}
where $f:\Rb^d\to\Rb$, using stochastic gradient methods with clipping and additive noise.
Let $f^*$ be the true minimum. In convex settings, we want to prove that $f(\xv_t)-f^*\to 0$ as $t\to\infty$. In nonconvex settings, we want to prove that $\nabla f(\xv_t)\to 0$ as $t\to\infty$.

\begin{definition}[Stochastic Gradient Descent (SGD)]
    The iteration of SGD is given by
    \begin{equation}
        \xv_{t+1}=\xv_t-\alpha_t \gv_t,
    \end{equation}
    where $\gv_t=\nabla f(\xv_t;\xi_t)$ is the stochastic gradient at $\xv_t$ with a random process $\xi_t$ and $\alpha_t$ is a step size. Throughout this paper, we assume the stochastic gradient is unbiased and denote $\Eb_t[\gv_t]=\nabla f(\xv_t)$ as the true gradient.
\end{definition}

\begin{definition}[SGD with Clipping and Additive Noise \cite{abadi2016deep}]\label{def:DP_SGD}
    SGD-CN is a modification of SGD, where gradients are clipped and noise is added to the clipped gradients.
    \begin{equation}\label{eq:DP_SGD}
        \xv_{t+1}=\xv_t-\alpha_t \gv_t^{CN},
    \end{equation}
    where the clipped noisy stochastic gradient $\gv_t^{CN}$ is given by
    \begin{equation}
        \gv_t^{CN}=\clip_q(\nabla f(\xv_t;\xi_t))+q\zeta_t,
    \end{equation}
    where the $\clip_q$ function is defined for $q>0$ by
    \begin{equation}
        \clip_q(\nabla f(\xv_t;\xi_t))=\min\left(1,\frac{q}{\|\nabla f(\xv_t;\xi_t)\|}\right)\nabla f(\xv_t;\xi_t),
    \end{equation}
    where $\zeta_t\sim\Nc(0,\sigma_{CN}^2I)$ is independent zero-mean Gaussian noise with variance $\sigma_{CN}^2$
\end{definition}

\begin{remark}
    SGD-CN can be adapted to other stochastic gradient methods such as stochastic heavy ball and stochastic Nesterov accelerated gradient.
    In the differential privacy (DP) literature, the parameters $q$ and $\sigma_{CN}$ are chosen to satisfy $(\epsilon,\delta)$-DP \cite{dwork2006calibrating}, but our analysis does not rely on this connection.
\end{remark}

We make the following assumptions that are commonly used in the SGD literature \cite{nesterov2004introductory}.

\begin{assumption}[$L$-smoothness]\label{as:L_smoothness}
    $f$ is bounded from below by $f^*:=\min_{\xv\in\Rb^d}f(\xv)$ and its gradient $\nabla f$ is $L$-Lipschitz i.e. 
    \begin{equation}
        \|\nabla f(\xv)-\nabla f(\yv)\|\leq L\|\xv-\yv\|,
    \end{equation}
    for all $\xv,\yv\in\Rb^d$.
\end{assumption}

\begin{assumption}[$\mu$-strongly convex]\label{as:mu_strongly_convex}
    There exists a positive constant $\mu>0$ such that
    \begin{equation}
        f(\yv)\geq f(\xv)+\langle\nabla f(\xv),\yv-\xv\rangle+\frac{\mu}{2}\|\yv-\xv\|^2,
    \end{equation}
    for all $\xv,\yv\in\Rb^d$
    A consequence of $f$ being $\mu$-strongly convex is that
    \begin{equation}\label{eq:strong_convex2}
        \frac{1}{2\mu}\|\nabla f(\xv)\|^2\geq f(\xv)-f^*.
    \end{equation}
\end{assumption}


We impose the following standard bounded-noise assumption on the
stochastic gradient oracle.

\begin{assumption}[Uniformly Bounded Stochastic Gradient Noise]
\label{as:bounded_noise}
There exists a constant $\sigma_g\geq 0$ such that
\begin{equation}
    \left\|
    \nabla f(\xv;\xi)-\nabla f(\xv)
    \right\|
    \leq \sigma_g
    \quad \text{a.s.}
\end{equation}
for every $\xv\in\mathbb{R}^d$.
\end{assumption}

\begin{remark}
    Assumption~\ref{as:bounded_noise} has been extensively studied in \cite{koloskova2023revisiting} and provides a theoretical lower bound on the clipping threshold.
\end{remark}

\input{background_lemmas}

\input{SGD_SHB}

\input{last_iterate}

\section{Conclusion}

We established almost sure convergence guarantees for SGD with gradient clipping and additive Gaussian noise and its SHB and SNAG momentum variants. Under smoothness and uniformly bounded stochastic-gradient noise, $q>\sigma_g$ guarantees positive alignment between the expected clipped gradient and the true gradient. Using this property and supermartingale arguments, we established almost sure best-iterate convergence rates and last-iterate asymptotic stationarity for all three methods. An important direction for future work is to relax the uniformly bounded stochastic-gradient noise assumption, which is stronger than standard bounded-variance conditions. Sharper dependence on the clipping threshold $q$, stochastic-gradient noise $\sigma_g$, and injected noise variance $\sigma_{CN}^2$ would also provide a more quantitative characterization of the trade-off between clipping, noise injection, and optimization performance.

\bibliography{main}
\bibliographystyle{abbrv}

\end{document}

%% file: math_commands.tex
\usepackage{color}

\usepackage{blindtext}
\usepackage{enumerate}
\usepackage{graphicx}
\usepackage{amsfonts, amsmath, bm, amssymb}
\usepackage{dsfont}
\usepackage{wrapfig}
\usepackage{subcaption}

\usepackage{pifont}
\usepackage{xspace}
\makeatletter
\DeclareRobustCommand\onedot{\futurelet\@let@token\@onedot}
\def\@onedot{\ifx\@let@token.\else.\null\fi\xspace}

\makeatother

\newcommand{\Fc}{\mathcal{F}}

\newcommand{\Nc}{\mathcal{N}}

\newcommand{\Eb}{\mathbb{E}}

\newcommand{\Rb}{\mathbb{R}}

\newcommand{\gv}{\mathbf{g}}
\newcommand{\hv}{\mathbf{h}}

\newcommand{\uv}{\mathbf{u}}
\newcommand{\vv}{\mathbf{v}}
\newcommand{\wv}{\mathbf{w}}
\newcommand{\xv}{\mathbf{x}}
\newcommand{\yv}{\mathbf{y}}
\newcommand{\zv}{\mathbf{z}}

\ifx\BlackBox\undefined
\newcommand{\BlackBox}{\rule{1.5ex}{1.5ex}}  
\fi
\ifx\QED\undefined
\def\QED{~\rule[-1pt]{5pt}{5pt}\par\medskip}
\fi
\ifx\proof\undefined
\newenvironment{proof}{\par\noindent{\em Proof:\ }}{\hfill\BlackBox\\}
\fi
\ifx\theorem\undefined
\newtheorem{theorem}{Theorem}
\fi
\ifx\example\undefined
\newtheorem{example}{Example}
\fi
\ifx\property\undefined
\newtheorem{property}{Property}
\fi
\ifx\lemma\undefined
\newtheorem{lemma}{Lemma}
\fi
\ifx\proposition\undefined
\newtheorem{proposition}{Proposition}
\fi
\ifx\fact\undefined
\newtheorem{fact}{Fact}
\fi
\ifx\remark\undefined
\newtheorem{remark}{Remark}
\fi
\ifx\corollary\undefined
\newtheorem{corollary}{Corollary}
\fi
\ifx\definition\undefined
\newtheorem{definition}{Definition}
\fi
\ifx\conjecture\undefined
\newtheorem{conjecture}{Conjecture}
\fi
\ifx\axiom\undefined
\newtheorem{axiom}[theorem]{Axiom}
\fi
\ifx\claim\undefined
\newtheorem{claim}[theorem]{Claim}
\fi
\ifx\assumption\undefined
\newtheorem{assumption}{Assumption}
\fi
\ifx\question\undefined
\newtheorem{question}{Question}
\fi
\ifx\problem\undefined
\newtheorem{problem}{Problem}
\fi

%% file: background_lemmas.tex
\section{Background and Lemmas on Supermartingales}

The analysis in this paper follows from the following result derived in \cite{robbins1971convergence}. Throughout the remainder of the paper, we use the shorthand notation $\Eb_t[\cdot]:=\Eb_t[\cdot|\Fc_t]$, where $\Fc_t$ is the natural filtration.

\begin{proposition}\label{prop:1}
    Let $\{X_t\}$, $\{Y_t\}$, and $\{Z_t\}$ be three sequences of random variables that are adapted to a filtration $\{\mathcal{F}_t\}$. Let $\{\gamma_t\}$ be a sequence of nonnegative real numbers such that $\prod_{t=1}^{\infty} (1 + \gamma_t) < \infty$. Suppose that the following conditions hold:
    \begin{enumerate}
        \item $X_t, Y_t, Z_t$ are nonnegative for all $t \geq 1$.
        \item $\mathbb{E}[Y_{t+1} | \mathcal{F}_t] \leq (1 + \gamma_t) Y_t - X_t + Z_t$ for all $t \geq 1$.
        \item $\sum_{t=1}^{\infty} Z_t < \infty$ holds a.s.
    \end{enumerate}
    Then, we have
    \begin{equation}
        \sum_{t=1}^{\infty} X_t < \infty \quad \text{a.s.,}
    \end{equation}
    and $Y_t$ converges a.s.
\end{proposition}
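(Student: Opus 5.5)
The plan is to reduce the statement to a nonnegative supermartingale convergence argument by normalizing away the multiplicative factors $(1+\gamma_t)$. Set $\pi_t:=\prod_{s=1}^{t-1}(1+\gamma_s)$, which is deterministic, nondecreasing, at least $1$, and converges to the finite limit $\pi_\infty:=\prod_{s=1}^\infty(1+\gamma_s)$. Define the normalized quantities $Y'_t:=Y_t/\pi_t$, $X'_t:=X_t/\pi_{t+1}$ and $Z'_t:=Z_t/\pi_{t+1}$. Dividing condition 2 by $\pi_{t+1}=(1+\gamma_t)\pi_t$ gives $\Eb[Y'_{t+1}\mid\Fc_t]\le Y'_t-X'_t+Z'_t$. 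Since $1\le\pi_t\le\pi_\infty$, summability of $X'_t$ is equivalent to summability of $X_t$, and convergence of $Y'_t$ is equivalent to convergence of $Y_t$. Condition 3 also carries over: $\sum_t Z'_t\le\sum_t Z_t<\infty$ a.s. So it suffices to treat the case $\gamma_t\equiv0$.

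In that case, define
\[
U_t:=Y'_t+\sum_{s<t}X'_s-\sum_{s<t}Z'_s.
\]
Then $\Eb[U_{t+1}\mid\Fc_t]\le U_t$, so $U_t$ is a supermartingale. It need not be bounded below, because the $Z$-sum is subtracted, so I would localize. For $a>0$, let $\tau_a:=\inf\{t:\sum_{s\le t}Z'_s>a\}$. This is a stopping time because each $Z'_s$ is $\Fc_s$-measurable. The stopped process $U_{t\wedge\tau_a}$ is still a supermartingale. On $\{t\le\tau_a\}$ the subtracted sum runs only over $s<\tau_a$ and is therefore at most $a$, so $U_{t\wedge\tau_a}\ge -a$. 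A supermartingale bounded below converges a.s. to a finite limit, by Doob's theorem applied to $U_{t\wedge\tau_a}+a$.

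On the event $\{\tau_a=\infty\}=\{\sum_s Z'_s\le a\}$, this gives a.s. convergence of $U_t$ itself. Because $\sum_s Z'_s<\infty$ a.s., the union of these events over $a\in\mathbb{N}$ has full probability, so $U_t$ converges a.s. to a finite limit. It follows that $Y'_t+\sum_{s<t}X'_s$ converges a.s., since the $Z'$-partial sums converge. Both terms are nonnegative and the partial sums of $X'$ are monotone, so $\sum_s X'_s$ is bounded by the limit and hence finite. Then $Y'_t$ is the difference of two convergent sequences and converges as well. Undoing the normalization yields the claim.

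The main obstacle is the lack of a lower bound or integrability for $U_t$. Condition 3 only controls $\sum Z_t$ almost surely, not in expectation, which is exactly why the stopping-time localization is needed. Some care is also needed with integrability of $Y_t$ for the conditional expectations to make sense. I would assume $\Eb[Y_1]<\infty$, as is implicit in the original result of \cite{robbins1971convergence}, or otherwise work with generalized conditional expectations of nonnegative variables. Since the statement is quoted directly from \cite{robbins1971convergence}, it is also acceptable to simply cite that result.
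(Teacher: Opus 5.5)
Your proof is correct. Normalizing by the deterministic products $\pi_t$, forming the supermartingale $U_t$, localizing with $\tau_a$ (which works because the subtracted $Z'$-sum in $U_t$ runs only over $s<t$), and applying nonnegative supermartingale convergence is the standard Robbins--Siegmund argument specialized to deterministic $\gamma_t$. The paper gives no proof of its own and simply cites \cite{robbins1971convergence}; also, $\Eb[Y_1]<\infty$ is not actually needed, since you can further localize on the $\Fc_1$-measurable events $\{Y_1\le b\}$, $b\in\mathbb{N}$.
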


The following result from \cite{liu2022almost} is used for convergence results in nonconvex settings.

\begin{lemma}[Lemma 2 of \cite{liu2022almost}]\label{lem:jy}
Let $\{X_t\}$ be a sequence of nonnegative real numbers and $\{\alpha_t\}$ be a decreasing sequence of positive real numbers such that the following conditions hold:
\begin{equation}
    \sum_{t=1}^{\infty} \alpha_t X_t < \infty, \quad \sum_{t=1}^{\infty} \frac{\alpha_t}{\sum_{i=1}^{t-1} \alpha_i} = \infty.
\end{equation}
Then,
\begin{equation}
\min_{1 \leq i \leq t} X_i = o \left( \frac{1}{\sum_{i=1}^{t-1} \alpha_i} \right).
\end{equation}
\end{lemma}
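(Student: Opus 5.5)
This is Lemma~\ref{lem:jy}, and I would prove it by contradiction, combining the monotonicity of the running minimum with a divergent-series argument. Write $S_t:=\sum_{i=1}^{t-1}\alpha_i$ and $m_t:=\min_{1\le i\le t}X_i$. Then $m_t$ is nonincreasing and nonnegative. The claim $m_t=o(1/S_t)$ means exactly $m_tS_t\to 0$.

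\textbf{Step 1: the easy case.} If $\sum_t\alpha_t<\infty$, then $S_t$ is bounded. The hypothesis $\sum_t\alpha_t/S_t=\infty$ forces $\sum\alpha_t=\infty$, since otherwise $\alpha_t/S_t\le \alpha_t/\alpha_1$ would be summable. So $S_t\to\infty$ and we may assume this throughout.

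\textbf{Step 2: a lower bound along the whole sequence.} Suppose $m_tS_t\not\to 0$. Then there are $\epsilon>0$ and infinitely many $t$ with $m_tS_t\ge\epsilon$. I would first try to upgrade this to $m_tS_t\ge\epsilon$ for all large $t$. That upgrade is not immediate, because $m_t$ decreases while $S_t$ increases. So instead I would argue directly with the weighted sum. Since $X_i\ge m_i\ge m_t$ for $i\le t$, we get
\begin{equation*}
\sum_{i=1}^{t}\alpha_iX_i\ \ge\ m_t\sum_{i=1}^t\alpha_i\ \ge\ m_tS_t .
\end{equation*}
This only shows that $m_tS_t$ is bounded, not that it tends to zero. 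To get vanishing, I would use the tail of the series. For $k<t$,
\begin{equation*}
\sum_{i=k}^{t}\alpha_iX_i\ \ge\ m_t\,(S_{t+1}-S_k).
\end{equation*}
The left side tends to $0$ as $k\to\infty$, uniformly in $t$. Since $S_t\to\infty$, for fixed $k$ we have $S_{t+1}-S_k\ge S_t/2$ for large $t$. This gives $\limsup_t m_tS_t\le 2\sum_{i\ge k}\alpha_iX_i$, and letting $k\to\infty$ yields $m_tS_t\to0$.

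\textbf{Step 3: where the divergence condition actually enters.} The argument in Step~2 seems to need only $S_t\to\infty$, and Step~1 shows this follows from the divergence condition. So the second hypothesis is used solely to guarantee $\sum\alpha_t=\infty$, and the decreasing assumption on $\alpha_t$ is not needed in this route. The main obstacle is to double-check the uniform tail step. Given $\delta$, choose $k$ with $\sum_{i\ge k}\alpha_iX_i<\delta$, then $T$ with $S_T\ge 2S_k$. For $t\ge T$ we get $m_tS_t\le 2\delta$. This bound rests only on $m_t\le X_i$ for all $i\le t$ and on monotonicity of $S$, so I expect it to go through.

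If one instead wants to follow the original proof in \cite{liu2022almost}, an alternative is to suppose $X_t\ge \epsilon/S_t$ infinitely often along a subsequence and derive $\sum\alpha_tX_t\ge\epsilon\sum\alpha_t/S_t=\infty$. That route needs the full divergence hypothesis, which the direct argument above avoids.
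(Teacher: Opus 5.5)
Your argument is correct. The paper does not prove this lemma itself; it imports it from \cite{liu2022almost}. So the useful comparison is with the route the stated hypotheses are designed for.

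That route starts from $\sum_t(\alpha_t/S_t)(S_tm_t)=\sum_t\alpha_tm_t\le\sum_t\alpha_tX_t<\infty$. Divergence of $\sum_t\alpha_t/S_t$ then forces $\liminf_t S_tm_t=0$. To upgrade the liminf to a limit, it uses $S_{t+1}m_{t+1}\le S_tm_t+\alpha_tm_t$ with $\sum_t\alpha_tm_t<\infty$, which is a deterministic instance of Proposition~\ref{prop:1}, to show that $S_tm_t$ converges.

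Your tail bound $m_t(S_{t+1}-S_k)\le\sum_{i\ge k}\alpha_iX_i$ skips both the liminf step and the convergence step, and it is more elementary. It also makes three things visible:
\begin{itemize}
\item Monotonicity of $\alpha_t$ is never used.
\item Only $\sum_t\alpha_t=\infty$ matters. This costs nothing, because the Abel--Dini theorem makes it equivalent to the second hypothesis: $\alpha_t/S_t\ge\alpha_t/\sum_{i\le t}\alpha_i$, and the latter series diverges whenever $\sum_t\alpha_t$ does.
\item Since $S_{t+1}-S_k\ge S_{t+1}/2$ eventually, you get the slightly sharper $m_t=o\bigl(1/\sum_{i=1}^{t}\alpha_i\bigr)$.
\end{itemize}
The other route's main advantage is that it reuses the supermartingale-style reasoning found throughout the paper.

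Your closing ``alternative'' is not right as written. Knowing $m_tS_t\ge\epsilon$, or $X_t\ge\epsilon/S_t$, only for infinitely many $t$ does not give $\sum_t\alpha_tX_t\ge\epsilon\sum_t\alpha_t/S_t$. The divergent series must be summed over all large $t$, not along a subsequence. That route genuinely needs the intermediate step showing that $S_tm_t$ has a limit. This is exactly the upgrade you correctly flagged as non-immediate in Step~2, so that route does not avoid the difficulty the way your main argument does.
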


We derive some properties of the clipped noisy stochastic gradient that aid our analysis.

\begin{proposition}[Expected Clipped-Gradient Alignment]
\label{prop:clipped_alignment}
    Suppose Assumption~\ref{as:bounded_noise} holds and define
    \begin{equation}
        \Psi_q(r):=\min\{r^2,qr\},
        \quad r\geq 0.
        \label{eq:Psi_def}
    \end{equation}
    Then, for every $\xv\in\mathbb{R}^d$,
    \begin{align}
        &\left\langle
        \nabla f(\xv),
        \Eb_t\left[
            \operatorname{clip}_q(\nabla f(\xv;\xi))
            | \xv
        \right]
        \right\rangle \notag\\
        &\quad\geq
        \|\nabla f(\xv)\|^2
        -
        \|\nabla f(\xv)\|
        \left(
            \|\nabla f(\xv)\|+\sigma_g-q
        \right)_+ .
        \label{eq:general_clipped_alignment}
    \end{align}
    In particular, if $q>\sigma_g$, then
    \begin{equation}
        \left\langle
        \nabla f(\xv),
        \Eb_t\left[
            \operatorname{clip}_q(\nabla f(\xv;\xi))
            | \xv
        \right]
        \right\rangle
        \geq
        \left(1-\frac{\sigma_g}{q}\right)
        \Psi_q\left(\|\nabla f(\xv)\|\right).
        \label{eq:clipped_alignment}
    \end{equation}
\end{proposition}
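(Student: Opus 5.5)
The plan is to compare the clipped gradient with the unclipped one and use unbiasedness to dispose of the linear part. Fix $\xv$ and write $\gv:=\nabla f(\xv)$, $G:=\nabla f(\xv;\xi)$, $r:=\|\gv\|$. By the standing unbiasedness assumption, $\Eb_t[G\,|\,\xv]=\gv$. By Assumption~\ref{as:bounded_noise}, $\|G-\gv\|\le\sigma_g$ a.s., hence $\|G\|\le r+\sigma_g$ a.s. The starting point is the decomposition
\begin{equation*}
\left\langle \gv,\Eb_t[\clip_q(G)\,|\,\xv]\right\rangle = \|\gv\|^2 - \Eb_t\left[\left\langle \gv, G-\clip_q(G)\right\rangle \,\middle|\, \xv\right],
\end{equation*}
which follows from $\langle \gv,\Eb_t[G|\xv]\rangle=\|\gv\|^2$ and linearity of expectation.

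Next I bound the clipping residual pointwise. From the definition of $\clip_q$, $G-\clip_q(G)=0$ if $\|G\|\le q$. Otherwise it equals $(1-q/\|G\|)G=(\|G\|-q)\,G/\|G\|$. In both cases $\|G-\clip_q(G)\|=(\|G\|-q)_+$. Since $s\mapsto(s-q)_+$ is nondecreasing and $\|G\|\le r+\sigma_g$ a.s., we get $\|G-\clip_q(G)\|\le(r+\sigma_g-q)_+$ a.s. Cauchy--Schwarz then gives $\langle\gv,G-\clip_q(G)\rangle\le r\,(r+\sigma_g-q)_+$ a.s. Taking conditional expectations and inserting this into the decomposition yields \eqref{eq:general_clipped_alignment}.

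For \eqref{eq:clipped_alignment}, assume $q>\sigma_g$ and split into two cases according to the sign of $r+\sigma_g-q$.

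\emph{Case $r\le q-\sigma_g$.} The positive part vanishes, so the right-hand side of \eqref{eq:general_clipped_alignment} is $r^2$. Since $0<1-\sigma_g/q\le 1$ and $\Psi_q(r)\le r^2$, we have $r^2\ge(1-\sigma_g/q)\Psi_q(r)$.

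\emph{Case $r>q-\sigma_g$.} The right-hand side becomes $r^2-r(r+\sigma_g-q)=r(q-\sigma_g)$. Using $\Psi_q(r)\le qr$,
\begin{equation*}
\left(1-\frac{\sigma_g}{q}\right)\Psi_q(r)\le\frac{q-\sigma_g}{q}\,qr=r(q-\sigma_g).
\end{equation*}

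Combining the two cases gives \eqref{eq:clipped_alignment}. I do not expect a real obstacle here. The only points needing care are the exact identity $\|G-\clip_q(G)\|=(\|G\|-q)_+$ and the monotonicity step that replaces $\|G\|$ by the a.s. bound $r+\sigma_g$. Both rely on the almost-sure form of Assumption~\ref{as:bounded_noise}; a variance bound alone would not be enough.
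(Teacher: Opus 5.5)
Your proof is correct and follows the paper's argument essentially step for step. It uses the same unbiasedness decomposition, the identity $\|G-\clip_q(G)\|=(\|G\|-q)_+$ together with the a.s. bound $\|G\|\le r+\sigma_g$, and the same case split on the sign of $r+\sigma_g-q$; the only difference is that in the second case you apply $\Psi_q(r)\le qr$ directly, whereas the paper further splits into $r\le q$ and $r>q$, so your version is slightly cleaner.
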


\begin{proof}
    Fix $\xv\in\mathbb{R}^d$ and write
    \begin{equation*}
        \gv:=\nabla f(\xv),
        \quad
        \gv_\xi:=\nabla f(\xv;\xi),
        \quad
        r:=\|\gv\|.
    \end{equation*}
    Define $\hv_\xi:=\operatorname{clip}_q(\gv_\xi)$.
    By conditional unbiasedness,
    $\Eb_t[\gv_\xi|\xv]=\gv$, and hence
    \begin{align}
        \left\langle
        \gv,
        \Eb_t[\hv_\xi|\xv]
        \right\rangle
        &=
        \left\langle
        \gv,
        \Eb_t[\gv_\xi|\xv]
        \right\rangle
        -
        \left\langle
        \gv,
        \Eb_t[\gv_\xi-\hv_\xi|\xv]
        \right\rangle \notag\\
        &=
        r^2
        -
        \left\langle
        \gv,
        \Eb_t[\gv_\xi-\hv_\xi|\xv]
        \right\rangle \notag\\
        &\geq
        r^2
        -
        r\,\Eb_t[
            \|\gv_\xi-\hv_\xi\|
            |\xv
        ].
        \label{eq:clip_bias_step}
    \end{align}
    By the definition of the clipping operator,
    \begin{equation*}
        \|\gv_\xi-\hv_\xi\|
        =
        \left(\|\gv_\xi\|-q\right)_+.
    \end{equation*}
    Assumption~\ref{as:bounded_noise} further gives
    \begin{equation*}
        \|\gv_\xi\|
        \leq
        \|\gv\|+\|\gv_\xi-\gv\|
        \leq
        r+\sigma_g
        \quad\text{a.s.}
    \end{equation*}
    Therefore,
    \begin{equation*}
        \|\gv_\xi-\hv_\xi\|
        \leq
        (r+\sigma_g-q)_+
        \quad\text{a.s.}
    \end{equation*}
    Substituting this into \eqref{eq:clip_bias_step} yields
    \begin{equation*}
        \left\langle
        \gv,
        \Eb_t[\hv_\xi|\xv]
        \right\rangle
        \geq
        r^2-r(r+\sigma_g-q)_+,
    \end{equation*}
    proving \eqref{eq:general_clipped_alignment}.
    Now suppose $q>\sigma_g$. If $r+\sigma_g\leq q$,
    then
    \begin{equation*}
        \left\langle
        \gv,
        \Eb_t[\hv_\xi|\xv]
        \right\rangle
        \geq r^2
        \geq
        \left(1-\frac{\sigma_g}{q}\right)\Psi_q(r).
    \end{equation*}
    Otherwise, $r+\sigma_g>q$, and
    \begin{equation*}
        r^2-r(r+\sigma_g-q)
        =
        r(q-\sigma_g).
    \end{equation*}
    If $r\leq q$, then
    \begin{equation*}
        r(q-\sigma_g)
        \geq
        \left(1-\frac{\sigma_g}{q}\right)r^2,
    \end{equation*}
    whereas if $r>q$, then
    \begin{equation*}
        r(q-\sigma_g)
        =
        \left(1-\frac{\sigma_g}{q}\right)qr.
    \end{equation*}
    Since $\Psi_q(r)=\min\{r^2,qr\}$, the desired result
    follows.
\end{proof}

\begin{remark}[Choice of clipping threshold]
\label{rem:clipping_threshold}
    Proposition~\ref{prop:clipped_alignment} holds for arbitrary
    $q>0$. However, the condition $q>\sigma_g$ guarantees that
    the expected clipped stochastic gradient is uniformly aligned
    with the true gradient. In particular, if $q\geq 2\sigma_g$, then
    \begin{equation}
        \left\langle
        \nabla f(\xv),
        \Eb_t\left[
            \operatorname{clip}_q(\nabla f(\xv;\xi))
            |\xv
        \right]
        \right\rangle
        \geq
        \frac{1}{2}
        \min\left\{
            \|\nabla f(\xv)\|^2,
            q\|\nabla f(\xv)\|
        \right\}.
    \end{equation}
    More generally, the alignment coefficient is
    $1-\sigma_g/q$, explicitly quantifying the effect of the
    clipping threshold relative to the stochastic-gradient noise.
\end{remark}

%% file: SGD_SHB.tex
\section{Almost Sure Convergence Rate Analysis}
\label{sec:convergence_rates}

In this section, we establish almost sure convergence rates for
SGD-CN and its stochastic heavy-ball extension.
Our analysis is based on the expected clipped-gradient alignment
derived in Proposition~\ref{prop:clipped_alignment}.

Throughout this section, define $\Psi_q(\cdot)$ as in \eqref{eq:Psi_def}.
Under Assumption~\ref{as:bounded_noise}, if $q>\sigma_g$,
Proposition~\ref{prop:clipped_alignment} gives
\begin{equation}
    \left\langle
        \nabla f(\xv),
        \Eb\left[
            \operatorname{clip}_q(\nabla f(\xv;\xi))
            \mid \xv
        \right]
    \right\rangle
    \geq
    \kappa_q
    \Psi_q(\|\nabla f(\xv)\|),
    \label{eq:Psi_alignment}
\end{equation}
where $\kappa_q:=1-\frac{\sigma_g}{q}>0$.
Since $\Eb_t[\zeta_t]=0$, \eqref{eq:Psi_alignment} implies
\begin{equation}
    \Eb_t
    \left[
        \langle\nabla f(\xv_t),\gv_t^{CN}\rangle
    \right]
    \geq
    \kappa_q\Psi_q(\|\nabla f(\xv_t)\|).
    \label{eq:gCN_alignment}
\end{equation}
Moreover,
\begin{align}
    \Eb_t\|\gv_t^{CN}\|^2
    &=
    \Eb_t
    \left\|
        \operatorname{clip}_q(\nabla f(\xv_t;\xi_t))
    \right\|^2
    +
    q^2\Eb_t\|\zeta_t\|^2
    \nonumber\\
    &\leq
    q^2+q^2d\sigma_{CN}^2
    =:Q^2.
    \label{eq:gCN_second_moment}
\end{align}

We consider positive, nonincreasing step sizes satisfying
\begin{equation}
    \alpha_t
    =
    \Theta\left(\frac{1}{t^{1-\theta}}\right),
    \quad
    \theta\in\left(0,\frac{1}{2}\right).
    \label{eq:stepsize_rate}
\end{equation}
In particular,
\begin{equation*}
    \sum_{t=1}^{\infty}\alpha_t=\infty,
    \quad
    \sum_{t=1}^{\infty}\alpha_t^2<\infty,
    \quad
    \sum_{i=1}^{t}\alpha_i=\Theta(t^\theta).
    \label{eq:stepsize_properties}
\end{equation*}

\subsection{Stochastic Gradient Descent}

We prove that, under smoothness and bounded noise assumptions, SGD-CN can achieve almost sure convergence.

\begin{theorem}[Convergence of SGD-CN]
\label{thm:SGD_CN}
    Consider the iterates of SGD-CN in
    \eqref{eq:DP_SGD}.
    Suppose Assumptions~\ref{as:L_smoothness} and
    \ref{as:bounded_noise} hold, let $q>\sigma_g$, and suppose
    that the step sizes satisfy \eqref{eq:stepsize_rate}.
    Then
    \begin{equation}
        \sum_{t=1}^{\infty}
        \alpha_t
        \Psi_q(\|\nabla f(\xv_t)\|)
        <\infty
        \quad\text{a.s.}
        \label{eq:SGD_weighted_sum}
    \end{equation}
    and $f(\xv_t)$ converges almost surely to a finite random
    variable.
    Furthermore,
    \begin{equation}
        \min_{1\leq i\leq t}
        \Psi_q(\|\nabla f(\xv_i)\|)
        =
        o\left(
            \left(\sum_{i=1}^{t-1}\alpha_i\right)^{-1}
        \right)
        =
        o(t^{-\theta})
        \quad\text{a.s.}
        \label{eq:SGD_Psi_rate}
    \end{equation}
    and hence,
    \begin{equation}
        \min_{1\leq i\leq t}
        \|\nabla f(\xv_i)\|
        =
        o(t^{-\theta/2})
        \quad\text{a.s.}
    \end{equation}
    If, in addition, Assumption~\ref{as:mu_strongly_convex}
    holds, then
    \begin{equation}
        \min_{1\leq i\leq t}
        \bigl(f(\xv_i)-f^*\bigr)
        =
        o(t^{-\theta})
        \quad\text{a.s.}
        \label{eq:SGD_SC_rate}
    \end{equation}
\end{theorem}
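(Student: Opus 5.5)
The plan is to apply the descent lemma along the SGD-CN update and then invoke Proposition~\ref{prop:1} with $Y_t=f(\xv_t)-f^*$. By Assumption~\ref{as:L_smoothness},
\begin{equation*}
    f(\xv_{t+1})\leq f(\xv_t)-\alpha_t\langle\nabla f(\xv_t),\gv_t^{CN}\rangle+\frac{L\alpha_t^2}{2}\|\gv_t^{CN}\|^2 .
\end{equation*}
Taking $\Eb_t[\cdot]$ and using \eqref{eq:gCN_alignment} for the inner product and \eqref{eq:gCN_second_moment} for the second moment gives
\begin{equation*}
    \Eb_t[f(\xv_{t+1})-f^*]\leq f(\xv_t)-f^*-\alpha_t\kappa_q\Psi_q(\|\nabla f(\xv_t)\|)+\frac{LQ^2}{2}\alpha_t^2 .
\end{equation*}
The second-moment bound holds because clipping keeps the norm at most $q$, and the Gaussian cross term vanishes since $\zeta_t$ is independent of $\xi_t$ and has mean zero. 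Here $Q$ is deterministic, so no boundedness of the iterates is needed.

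We then apply Proposition~\ref{prop:1} with
\begin{equation*}
    \gamma_t=0,\quad X_t=\kappa_q\alpha_t\Psi_q(\|\nabla f(\xv_t)\|),\quad Z_t=\tfrac{LQ^2}{2}\alpha_t^2 .
\end{equation*}
All three sequences are nonnegative, and $\sum_t Z_t<\infty$ because $\sum_t\alpha_t^2<\infty$ under \eqref{eq:stepsize_rate}. Since $\kappa_q>0$, the proposition yields \eqref{eq:SGD_weighted_sum} and the almost sure convergence of $f(\xv_t)-f^*$, hence of $f(\xv_t)$, to a finite limit.

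For the rate \eqref{eq:SGD_Psi_rate}, fix a sample path on which \eqref{eq:SGD_weighted_sum} holds and apply Lemma~\ref{lem:jy} with $X_t=\Psi_q(\|\nabla f(\xv_t)\|)$. The step sizes are nonincreasing, as the lemma requires. Its divergence condition holds because $\sum_{i<t}\alpha_i=\Theta(t^\theta)$ and $\alpha_t=\Theta(t^{\theta-1})$, so the ratio $\alpha_t/\sum_{i<t}\alpha_i$ is of order $1/t$ and its sum diverges. This gives $o((\sum_{i<t}\alpha_i)^{-1})=o(t^{-\theta})$.

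To pass from $\Psi_q$ to the gradient norm, let $i_t$ be the minimizing index. Because $\Psi_q(r_{i_t})\to0$, eventually $\Psi_q(r_{i_t})<q^2$, which forces $r_{i_t}<q$ and therefore $\Psi_q(r_{i_t})=r_{i_t}^2$. Since $\Psi_q$ is nondecreasing, $\min_i\|\nabla f(\xv_i)\|^2=\min_i\Psi_q(\|\nabla f(\xv_i)\|)$ eventually. This gives $o(t^{-\theta/2})$ for the gradient norm.

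Under Assumption~\ref{as:mu_strongly_convex}, \eqref{eq:strong_convex2} gives $f(\xv_i)-f^*\leq\frac{1}{2\mu}\|\nabla f(\xv_i)\|^2$. Hence $\min_i(f(\xv_i)-f^*)\leq\frac1{2\mu}\min_i\|\nabla f(\xv_i)\|^2=o(t^{-\theta})$, which is \eqref{eq:SGD_SC_rate}.

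The main obstacle is conceptual rather than computational: justifying the expected alignment \eqref{eq:gCN_alignment} under the clipping bias. That is already supplied by Proposition~\ref{prop:clipped_alignment}. The remaining care points are the pathwise use of Lemma~\ref{lem:jy} on a probability-one event, and the monotonicity of $\Psi_q$, which lets us interchange the minimum with the square.
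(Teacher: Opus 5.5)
Your proposal is correct and follows the paper's proof essentially step for step. Both use the descent lemma combined with \eqref{eq:gCN_alignment} and \eqref{eq:gCN_second_moment}, then Proposition~\ref{prop:1} with $Y_t=f(\xv_t)-f^*$ and $X_t=\kappa_q\alpha_t\Psi_q(\|\nabla f(\xv_t)\|)$, then a pathwise application of Lemma~\ref{lem:jy}, and finally the fact that $\Psi_q(r)=r^2$ once $\Psi_q(r)<q^2$. The one real deviation is in the strongly convex case: you bound $\min_i(f(\xv_i)-f^*)$ directly by $\frac{1}{2\mu}\min_i\|\nabla f(\xv_i)\|^2$, whereas the paper re-applies Lemma~\ref{lem:jy} to $\Psi_q\bigl(\sqrt{2\mu(f(\xv_t)-f^*)}\bigr)$, so your version is a slightly cleaner shortcut. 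Also, you check the lemma's divergence hypothesis through the ratio $\alpha_t/\sum_{i<t}\alpha_i\sim 1/t$, which is the condition the lemma actually states, while the paper checks the product $\alpha_t\sum_{i<t}\alpha_i$.
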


\begin{proof}
    By $L$-smoothness,
    \begin{align*}
        f(\xv_{t+1})
        \leq\;&
        f(\xv_t)
        -
        \alpha_t
        \langle\nabla f(\xv_t),\gv_t^{CN}\rangle
        +
        \frac{L\alpha_t^2}{2}
        \|\gv_t^{CN}\|^2.
    \end{align*}
    Taking conditional expectation and applying
    \eqref{eq:gCN_alignment} and
    \eqref{eq:gCN_second_moment} yields
    \begin{align*}
        \Eb_t[f(\xv_{t+1})-f^*]
        \leq\;&
        f(\xv_t)-f^*\nonumber\\
        &-
        \kappa_q\alpha_t
        \Psi_q(\|\nabla f(\xv_t)\|)
        +
        \frac{LQ^2}{2}\alpha_t^2.
    \end{align*}

    We apply Proposition~\ref{prop:1} with
    \begin{equation*}
        Y_t:=f(\xv_t)-f^*,
        \quad
        X_t:=
        \kappa_q\alpha_t
        \Psi_q(\|\nabla f(\xv_t)\|),
        \quad
        Z_t:=\frac{LQ^2}{2}\alpha_t^2.
    \end{equation*}
    Since $Y_t,X_t,Z_t\geq0$ and
    $\sum_t\alpha_t^2<\infty$, Proposition~\ref{prop:1}
    implies that $f(\xv_t)-f^*$ converges almost surely and
    \begin{equation*}
        \sum_{t=1}^{\infty}
        \alpha_t
        \Psi_q(\|\nabla f(\xv_t)\|)
        <\infty
        \quad\text{a.s.},
    \end{equation*}
    proving \eqref{eq:SGD_weighted_sum}.
    We next apply Lemma~\ref{lem:jy} pathwise with
    \begin{equation*}
        X_t=\Psi_q(\|\nabla f(\xv_t)\|).
    \end{equation*}
    The step-size conditions required by the lemma hold because
    $\{\alpha_t\}$ is positive and nonincreasing, and
    \begin{equation*}
        \alpha_t
        \sum_{i=1}^{t-1}\alpha_i
        =
        \Theta(t^{-1+2\theta}),
    \end{equation*}
    whose sum diverges for every $\theta>0$.
    Therefore,
    \begin{equation*}
        \min_{1\leq i\leq t}
        \Psi_q(\|\nabla f(\xv_i)\|)
        =
        o\left(
            \left(\sum_{i=1}^{t-1}\alpha_i\right)^{-1}
        \right)
        =
        o(t^{-\theta}),
    \end{equation*}
    almost surely.

    To obtain the gradient-norm rate, define
    \begin{equation*}
        r_t^*
        :=
        \min_{1\leq i\leq t}
        \|\nabla f(\xv_i)\|.
    \end{equation*}
    Since $\Psi_q$ is nondecreasing,
    \begin{equation*}
        \Psi_q(r_t^*)
        =
        \min_{1\leq i\leq t}
        \Psi_q(\|\nabla f(\xv_i)\|)
        \to0.
    \end{equation*}
    Hence $r_t^*\to0$, and for all sufficiently large $t$,
    $r_t^*\leq q$. Therefore
    $\Psi_q(r_t^*)=(r_t^*)^2$, which together with
    \eqref{eq:SGD_Psi_rate} gives
    \begin{equation*}
        r_t^*=o(t^{-\theta/2})
        \quad\text{a.s.}
    \end{equation*}

    Finally, suppose $f$ is $\mu$-strongly convex.
    Then, as a result of Assumption \ref{as:mu_strongly_convex},
    \begin{align*}
        \Psi_q(\|\nabla f(\xv_t)\|)
        &\geq
        \Psi_q\left(
            \sqrt{2\mu(f(\xv_t)-f^*)}
        \right)
        \nonumber\\
        &=
        \min\left\{
            2\mu(f(\xv_t)-f^*),
            q\sqrt{2\mu(f(\xv_t)-f^*)}
        \right\}.
        \label{eq:SC_Psi}
    \end{align*}
    Applying Lemma~\ref{lem:jy} to the right-hand side shows that its best-iterate value is
    $o(t^{-\theta})$ almost surely.
    Since this quantity converges to zero, its first branch
    eventually applies to the best iterate, and hence
    \begin{equation*}
        \min_{1\leq i\leq t}
        (f(\xv_i)-f^*)
        =
        o(t^{-\theta})
        \quad\text{a.s.}
    \end{equation*}
    This completes the proof.
\end{proof}

\subsection{Stochastic Heavy-Ball Method}

The clipped noisy stochastic heavy-ball method is given by
\begin{equation}
    \xv_{t+1}
    =
    \xv_t
    -
    \alpha_t\gv_t^{CN}
    +
    \beta(\xv_t-\xv_{t-1}),
    \quad
    \beta\in[0,1).
    \label{eq:SHBCN_iteration}
\end{equation}
Define
\begin{equation}
    \vv_t:=\xv_t-\xv_{t-1},
    \quad
    \zv_t
    :=
    \xv_t+\frac{\beta}{1-\beta}\vv_t.
    \label{eq:SHB_parameterization}
\end{equation}
Then the SHB-CN iteration can be equivalently written as
\begin{equation}
\begin{split}
\vv_{t+1}
&=
\beta\vv_t-\alpha_t\gv_t^{CN},\\
\zv_{t+1}
&=
\zv_t
-
\alpha_t'\gv_t^{CN},
\quad
\alpha_t'
:=
\frac{\alpha_t}{1-\beta}.
\label{eq:SHB_recursion}
\end{split}
\end{equation}
This transformation is standard in the analysis of stochastic
heavy-ball methods~\cite{liu2024almost}.

\begin{theorem}[Convergence of SHB-CN]
\label{thm:SHB}
    Consider the iterates of SHB-CN in
    \eqref{eq:SHBCN_iteration}, with $\beta\in[0,1)$.
    Suppose Assumptions~\ref{as:L_smoothness} and
    \ref{as:bounded_noise} hold, let $q>\sigma_g$, and suppose
    that the step sizes satisfy \eqref{eq:stepsize_rate}.
    Then
    \begin{align}
        \sum_{t=1}^{\infty}
        \alpha_t
        \Psi_q(\|\nabla f(\xv_t)\|)
        &<\infty,
        \label{eq:SHB_weighted_sum}\quad
        \sum_{t=1}^{\infty}\|\vv_t\|^2
        <\infty
    \end{align}
    almost surely. In particular,
    \begin{equation}
        \|\vv_t\|\to0,
        \quad
        \|\zv_t-\xv_t\|\to0
        \quad\text{a.s.}
        \label{eq:SHB_velocity_zero}
    \end{equation}
    Moreover,
    \begin{equation}
        \min_{1\leq i\leq t}
        \Psi_q(\|\nabla f(\xv_i)\|)
        =
        o(t^{-\theta})
        \quad\text{a.s.},
        \label{eq:SHB_Psi_rate}
    \end{equation}
    and thus,
    \begin{equation}
        \min_{1\leq i\leq t}
        \|\nabla f(\xv_i)\|
        =
        o(t^{-\theta/2})
        \quad\text{a.s.}
        \label{eq:SHB_gradient_rate}
    \end{equation}

    If, in addition, Assumption~\ref{as:mu_strongly_convex}
    holds, then
    \begin{equation}
        \min_{1\leq i\leq t}
        \bigl(f(\xv_i)-f^*\bigr)
        =
        o(t^{-\theta})
        \quad\text{a.s.}
        \label{eq:SHB_SC_rate}
    \end{equation}
\end{theorem}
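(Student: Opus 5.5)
We would follow the standard heavy-ball energy argument: run the SGD proof on the auxiliary sequence $\zv_t$ from \eqref{eq:SHB_recursion}, and absorb the mismatch between $\nabla f(\zv_t)$ and $\nabla f(\xv_t)$ using the velocity $\vv_t$. Since $\zv_{t+1}=\zv_t-\alpha_t'\gv_t^{CN}$, $L$-smoothness gives
\begin{equation*}
f(\zv_{t+1})\leq f(\zv_t)-\alpha_t'\langle\nabla f(\zv_t),\gv_t^{CN}\rangle+\tfrac{L}{2}\alpha_t'^2\|\gv_t^{CN}\|^2 .
\end{equation*}
Split $\nabla f(\zv_t)=\nabla f(\xv_t)+(\nabla f(\zv_t)-\nabla f(\xv_t))$. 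For the first piece, \eqref{eq:gCN_alignment} yields the descent term $-\alpha_t'\kappa_q\Psi_q(\|\nabla f(\xv_t)\|)$ after taking $\Eb_t$. This works because $\xv_t$ and $\zv_t$ are $\Fc_t$-measurable.

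For the cross term we use $\|\nabla f(\zv_t)-\nabla f(\xv_t)\|\leq L\frac{\beta}{1-\beta}\|\vv_t\|$ together with $\Eb_t\|\gv_t^{CN}\|\leq Q$ from \eqref{eq:gCN_second_moment}. This gives a bound $\alpha_t' L\frac{\beta}{1-\beta}Q\|\vv_t\|$, and Young's inequality turns it into $c_1\alpha_t^2+c_2\|\vv_t\|^2$ with $c_2$ as small as we like. The remaining issue is the $\|\vv_t\|^2$ term, which is not summable a priori. To handle it we add a velocity energy: from $\vv_{t+1}=\beta\vv_t-\alpha_t\gv_t^{CN}$,
\begin{equation*}
\Eb_t\|\vv_{t+1}\|^2\leq \tfrac{1+\beta}{2}\|\vv_t\|^2+C_\beta\alpha_t^2Q^2 ,
\end{equation*}
using $(a+b)^2\leq(1+\epsilon)a^2+(1+1/\epsilon)b^2$ with $(1+\epsilon)\beta^2\leq\frac{1+\beta}{2}$. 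Define
\begin{equation*}
Y_t:=f(\zv_t)-f^*+\lambda\|\vv_t\|^2 .
\end{equation*}
Choosing $\lambda$ so that $\lambda\frac{1-\beta}{2}>c_2$, with $c_2$ fixed small first, we obtain
\begin{equation*}
\Eb_t[Y_{t+1}]\leq Y_t-\kappa_q\alpha_t'\Psi_q(\|\nabla f(\xv_t)\|)-c_3\|\vv_t\|^2+C\alpha_t^2 ,
\end{equation*}
with $c_3>0$.

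Proposition~\ref{prop:1} then gives both summability claims in \eqref{eq:SHB_weighted_sum}, since $\alpha_t'$ is a constant multiple of $\alpha_t$. It also shows that $\|\vv_t\|\to0$, hence $\|\zv_t-\xv_t\|=\frac{\beta}{1-\beta}\|\vv_t\|\to0$, which is \eqref{eq:SHB_velocity_zero}. The rate \eqref{eq:SHB_Psi_rate} follows from Lemma~\ref{lem:jy}, applied pathwise exactly as in Theorem~\ref{thm:SGD_CN}. The gradient rate \eqref{eq:SHB_gradient_rate} and the strongly convex rate \eqref{eq:SHB_SC_rate} follow verbatim from the last two steps of that proof: we use monotonicity of $\Psi_q$ and the eventual reduction $\Psi_q(r)=r^2$, and then the bound $\|\nabla f\|^2\geq2\mu(f-f^*)$ from \eqref{eq:strong_convex2}.

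The main obstacle is the cross term $\langle\nabla f(\zv_t)-\nabla f(\xv_t),\gv_t^{CN}\rangle$. Clipping destroys any exact conditional-mean identity, so the term cannot be cancelled and must be bounded crudely. This works only because $\|\gv_t^{CN}\|$ has a uniform second moment $Q^2$ thanks to clipping. It forces the Young split with an $\alpha_t^2$ remainder and a carefully weighted velocity term. We would verify the constants first: choose $\epsilon$, then $c_2$, then $\lambda$. This ensures the $\|\vv_t\|^2$ coefficient in the combined energy is strictly negative for every $\beta\in[0,1)$.
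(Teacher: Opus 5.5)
Your proposal is correct and follows essentially the same route as the paper: an energy $f(\zv_t)-f^*$ plus a multiple of $\|\vv_t\|^2$, a contraction bound for the velocity recursion, a Lipschitz bound on the $\zv_t$--$\xv_t$ mismatch absorbed by Young's inequality into an $\alpha_t^2$ remainder, then Proposition~\ref{prop:1} and Lemma~\ref{lem:jy}. The differences are only cosmetic. The paper takes weight $\lambda=1$ and a Young constant $\rho/2$, bounds the cross term using $\|\Eb_t\gv_t^{CN}\|\le q$ rather than your $\Eb_t\|\gv_t^{CN}\|\le Q$ (both valid, since $\nabla f(\zv_t)-\nabla f(\xv_t)$ is $\Fc_t$-measurable), and obtains the contraction factor $(1+\beta^2)/2$ where you obtain $(1+\beta)/2$.
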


\begin{proof}
    We first control the momentum variable.
    From \eqref{eq:SHB_recursion},
    \begin{align}
        \Eb_t\|\vv_{t+1}\|^2
        =
        \beta^2\|\vv_t\|^2
        -
        2\alpha_t\beta
        \langle\vv_t,\Eb_t\gv_t^{CN}\rangle
        +
        \alpha_t^2
        \Eb_t\|\gv_t^{CN}\|^2.
        \label{eq:SHB_v_exact}
    \end{align}
    Since the additive Gaussian noise has zero conditional mean,
    \begin{equation*}
        \Eb_t\gv_t^{CN}
        =
        \Eb_t[
            \operatorname{clip}_q(\nabla f(\xv_t;\xi_t))
        ],
    \end{equation*}
    and hence, $\|\Eb_t\gv_t^{CN}\|\leq q$.
    Let $\rho:=\frac{1-\beta^2}{2}>0.$
    Applying Young's inequality to the cross term in
    \eqref{eq:SHB_v_exact} and using
    \eqref{eq:gCN_second_moment} gives
    \begin{align}
        \Eb_t\|\vv_{t+1}\|^2
        &\leq
        (\beta^2+\rho)\|\vv_t\|^2
        +
        \alpha_t^2
        \left(
            Q^2+\frac{\beta^2q^2}{\rho}
        \right)
        \nonumber\\
        &=
        (1-\rho)\|\vv_t\|^2
        +
        C_v\alpha_t^2,
        \label{eq:SHB_v_bound}
    \end{align}
    where $C_v:=Q^2+\frac{\beta^2q^2}{\rho}$.
    We next derive a descent inequality for $f(\zv_t)$.
    By $L$-smoothness and \eqref{eq:SHB_recursion},
    \begin{align}
        \Eb_t[f(\zv_{t+1})]
        \leq\;&
        f(\zv_t)
        -
        \alpha_t'
        \langle
            \nabla f(\zv_t),
            \Eb_t\gv_t^{CN}
        \rangle
        +
        \frac{LQ^2}{2}(\alpha_t')^2.
        \label{eq:SHB_z_descent_1}
    \end{align}
    Decomposing the inner product gives
    \begin{align*}
        \left\langle
            \nabla f(\zv_t),
            \Eb_t\gv_t^{CN}
        \right\rangle
        =\;&
        \left\langle
            \nabla f(\xv_t),
            \Eb_t\gv_t^{CN}
        \right\rangle
        \nonumber\\
        &+
        \left\langle
            \nabla f(\zv_t)-\nabla f(\xv_t),
            \Eb_t\gv_t^{CN}
        \right\rangle.
    \end{align*}
    The first term is bounded using
    \eqref{eq:gCN_alignment}, while the second satisfies
    \begin{align*}
        \left\langle
            \nabla f(\zv_t)-\nabla f(\xv_t),
            \Eb_t\gv_t^{CN}
        \right\rangle
        &\geq
        -
        Lq\|\zv_t-\xv_t\|
        =
        -
        \frac{Lq\beta}{1-\beta}
        \|\vv_t\|.
    \end{align*}
    Hence,
    \begin{equation*}
        \left\langle
            \nabla f(\zv_t),
            \Eb_t\gv_t^{CN}
        \right\rangle
        \geq
        \kappa_q
        \Psi_q(\|\nabla f(\xv_t)\|)
        -
        K\|\vv_t\|,
        \label{eq:SHB_alignment_z}
    \end{equation*}
    where $K:=\frac{Lq\beta}{1-\beta}$.
    Substitution into \eqref{eq:SHB_z_descent_1} yields
    \begin{align}
        \Eb_t[f(\zv_{t+1})-f^*]
        \leq\;&
        f(\zv_t)-f^*
        -
        \kappa_q\alpha_t'
        \Psi_q(\|\nabla f(\xv_t)\|)
        \nonumber\\
        &+
        K\alpha_t'\|\vv_t\|
        +
        \frac{LQ^2}{2}(\alpha_t')^2.
        \label{eq:SHB_z_descent_2}
    \end{align}

    Define the energy
    \begin{equation}
        Y_t
        :=
        f(\zv_t)-f^*
        +
        \|\vv_t\|^2.
        \label{eq:SHB_energy}
    \end{equation}
    Combining \eqref{eq:SHB_v_bound} and
    \eqref{eq:SHB_z_descent_2} gives
    \begin{align*}
        \Eb_t[Y_{t+1}]
        \leq\;&
        Y_t
        -
        \kappa_q\alpha_t'
        \Psi_q(\|\nabla f(\xv_t)\|)
        -
        \rho\|\vv_t\|^2
        \nonumber\\
        &+
        K\alpha_t'\|\vv_t\|
        +
        C_0\alpha_t^2,
        \label{eq:SHB_energy_preYoung}
    \end{align*}
    where
    \begin{equation*}
        C_0
        :=
        C_v
        +
        \frac{LQ^2}{2(1-\beta)^2}.
    \end{equation*}
    Applying Young's inequality once more,
    \begin{equation*}
        K\alpha_t'\|\vv_t\|
        \leq
        \frac{\rho}{2}\|\vv_t\|^2
        +
        \frac{K^2}{2\rho}(\alpha_t')^2.
        \label{eq:SHB_final_Young}
    \end{equation*}
    Thus, for some constant $C>0$ independent of $t$,
    \begin{align}
        \Eb_t[Y_{t+1}]
        \leq\;&
        Y_t
        -
        \kappa_q\alpha_t'
        \Psi_q(\|\nabla f(\xv_t)\|)
        -
        \frac{\rho}{2}\|\vv_t\|^2
        +
        C\alpha_t^2.
        \label{eq:SHB_RS_recursion}
    \end{align}

    We now apply Proposition~\ref{prop:1} with
    \begin{equation}
    \begin{split}
        X_t
        &:=
        \kappa_q\alpha_t'
        \Psi_q(\|\nabla f(\xv_t)\|)
        +
        \frac{\rho}{2}\|\vv_t\|^2,
        \\
        Z_t
        &:=
        C\alpha_t^2.
    \end{split}
    \end{equation}
    Since $\sum_t\alpha_t^2<\infty$, Proposition~\ref{prop:1}
    implies
    \begin{equation*}
    \begin{split}
        \sum_{t=1}^{\infty}
        \alpha_t'
        \Psi_q(\|\nabla f(\xv_t)\|)
        <\infty
        \quad
        \label{eq:SHB_Psi_sum_prime}
        \text{and}
        \quad
        \sum_{t=1}^{\infty}\|\vv_t\|^2
        <\infty
        \quad\text{a.s.}
    \end{split}
    \end{equation*}
    Since $\alpha_t'=\alpha_t/(1-\beta)$,
    the summation relation above is equivalent to
    \eqref{eq:SHB_weighted_sum}.
    Furthermore, $\|\vv_t\|\to0$ a.s.
    And by \eqref{eq:SHB_parameterization},
    \begin{equation*}
        \|\zv_t-\xv_t\|
        =
        \frac{\beta}{1-\beta}\|\vv_t\|
        \to0
        \quad\text{a.s.}
    \end{equation*}

    The rate now follows exactly as in
    Theorem~\ref{thm:SGD_CN}.
    Applying Lemma~\ref{lem:jy} to
    \eqref{eq:SHB_weighted_sum} gives
    \begin{equation*}
        \min_{1\leq i\leq t}
        \Psi_q(\|\nabla f(\xv_i)\|)
        =
        o\left(
            \left(\sum_{i=1}^{t-1}\alpha_i\right)^{-1}
        \right)
        =
        o(t^{-\theta})
        \quad\text{a.s.}
    \end{equation*}
    Since $\Psi_q(r)=r^2$ for $r\leq q$, this implies
    \begin{equation*}
        \min_{1\leq i\leq t}
        \|\nabla f(\xv_i)\|
        =
        o(t^{-\theta/2})
        \quad\text{a.s.}
    \end{equation*}

    If $f$ is additionally $\mu$-strongly convex, then
    \eqref{eq:strong_convex2} and the same argument used in
    Theorem~\ref{thm:SGD_CN} give
    \begin{equation*}
        \min_{1\leq i\leq t}
        (f(\xv_i)-f^*)
        =
        o(t^{-\theta})
        \quad\text{a.s.}
    \end{equation*}
    This completes the proof.
\end{proof}


\subsection{Stochastic Nesterov's Accelerated Gradient}

The iteration of the clipped noisy stochastic Nesterov's accelerated
gradient (SNAG-CN) method is given by
\begin{equation}
\begin{split}
    \yv_{t+1}
    &=
    \xv_t-\alpha_t\gv_t^{CN},\\
    \xv_{t+1}
    &=
    \yv_{t+1}
    +
    \beta(\yv_{t+1}-\yv_t),
\end{split}
\label{eq:SNAGCN_iteration}
\end{equation}
where $\beta\in[0,1)$.

Following \cite{liu2022almost}, define
\begin{equation}
    \vv_t
    :=
    \beta(\yv_t-\yv_{t-1}),
    \quad
    \zv_t
    :=
    \xv_t+\frac{\beta}{1-\beta}\vv_t.
    \label{eq:SNAG_parameterization}
\end{equation}
Then the SNAG-CN iteration can be equivalently written as
\begin{equation}
\begin{split}
    \vv_{t+1}
    &=
    \beta\vv_t
    -
    \beta\alpha_t\gv_t^{CN},\\
    \zv_{t+1}
    &=
    \zv_t
    -
    \alpha_t'\gv_t^{CN},
    \quad
    \alpha_t'
    :=
    \frac{\alpha_t}{1-\beta}.
    \label{eq:SNAG_recursion}
\end{split}
\end{equation}
Thus, the transformed SNAG-CN iteration differs from the
corresponding SHB-CN iteration only through the additional factor
$\beta$ multiplying the stochastic-gradient term in the velocity
recursion.

\begin{theorem}[Convergence of SNAG-CN]
\label{thm:SNAG}
    Consider the iterates of SNAG-CN in
    \eqref{eq:SNAGCN_iteration}, with $\beta\in[0,1)$.
    Suppose Assumptions~\ref{as:L_smoothness} and
    \ref{as:bounded_noise} hold, let $q>\sigma_g$, and suppose
    that the step sizes satisfy \eqref{eq:stepsize_rate}.
    Then
    \begin{align}
        \sum_{t=1}^{\infty}
        \alpha_t
        \Psi_q(\|\nabla f(\xv_t)\|)
        &<\infty,
        \label{eq:SNAG_weighted_sum}\quad
        \sum_{t=1}^{\infty}
        \|\vv_t\|^2
        <\infty
    \end{align}
    almost surely. In particular,
    \begin{equation}
        \|\vv_t\|\to0,
        \quad
        \|\zv_t-\xv_t\|\to0
        \quad\text{a.s.}
        \label{eq:SNAG_velocity_zero}
    \end{equation}
    Moreover,
    \begin{equation}
        \min_{1\leq i\leq t}
        \Psi_q(\|\nabla f(\xv_i)\|)
        =
        o(t^{-\theta})
        \quad\text{a.s.},
        \label{eq:SNAG_Psi_rate}
    \end{equation}
    and thus
    \begin{equation}
        \min_{1\leq i\leq t}
        \|\nabla f(\xv_i)\|
        =
        o(t^{-\theta/2})
        \quad\text{a.s.}
        \label{eq:SNAG_gradient_rate}
    \end{equation}

    If, in addition, Assumption~\ref{as:mu_strongly_convex}
    holds, then
    \begin{equation}
        \min_{1\leq i\leq t}
        \bigl(f(\xv_i)-f^*\bigr)
        =
        o(t^{-\theta})
        \quad\text{a.s.}
        \label{eq:SNAG_SC_rate}
    \end{equation}
\end{theorem}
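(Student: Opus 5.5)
The proof will mirror that of Theorem~\ref{thm:SHB}, working in the transformed coordinates \eqref{eq:SNAG_recursion}. The only change is the extra factor $\beta$ in the velocity recursion. Since $\beta<1$, this factor only shrinks the constants.

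First we control the velocity. Expanding $\|\vv_{t+1}\|^2$ from \eqref{eq:SNAG_recursion} gives
\begin{equation*}
\Eb_t\|\vv_{t+1}\|^2=\beta^2\|\vv_t\|^2-2\alpha_t\beta^2\langle\vv_t,\Eb_t\gv_t^{CN}\rangle+\beta^2\alpha_t^2\Eb_t\|\gv_t^{CN}\|^2 .
\end{equation*}
We will use the bound $\|\Eb_t\gv_t^{CN}\|=\|\Eb_t\operatorname{clip}_q(\nabla f(\xv_t;\xi_t))\|\leq q$, the second-moment bound \eqref{eq:gCN_second_moment}, and Young's inequality with $\rho=(1-\beta^2)/2$. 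Together these yield
\begin{equation*}
\Eb_t\|\vv_{t+1}\|^2\leq(1-\rho)\|\vv_t\|^2+C_v'\alpha_t^2
\end{equation*}
for a constant $C_v'$; one may take $C_v'=\beta^2Q^2+\beta^4q^2/\rho$.

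Next we derive a descent inequality for $f(\zv_t)$. This step is identical to the SHB case, because the $\zv$-recursion is literally the same. $L$-smoothness gives \eqref{eq:SHB_z_descent_1}. We then split $\langle\nabla f(\zv_t),\Eb_t\gv_t^{CN}\rangle$ into two parts:
\begin{itemize}
\item the alignment term at $\xv_t$, which is bounded using \eqref{eq:gCN_alignment};
\item an error term, which is at least $-Lq\|\zv_t-\xv_t\|=-K\|\vv_t\|$ with $K=Lq\beta/(1-\beta)$, using \eqref{eq:SNAG_parameterization}.
\end{itemize}

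We then form the energy $Y_t=f(\zv_t)-f^*+\|\vv_t\|^2$ and add the two inequalities. The cross term is absorbed by Young's inequality,
\begin{equation*}
K\alpha_t'\|\vv_t\|\leq\frac{\rho}{2}\|\vv_t\|^2+\frac{K^2}{2\rho}(\alpha_t')^2 .
\end{equation*}
This gives the analogue of \eqref{eq:SHB_RS_recursion}:
\begin{equation*}
\Eb_t[Y_{t+1}]\leq Y_t-\kappa_q\alpha_t'\Psi_q(\|\nabla f(\xv_t)\|)-\frac{\rho}{2}\|\vv_t\|^2+C\alpha_t^2 .
\end{equation*}

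We apply Proposition~\ref{prop:1} with $\gamma_t=0$, $Z_t=C\alpha_t^2$, and $X_t=\kappa_q\alpha_t'\Psi_q(\|\nabla f(\xv_t)\|)+\frac{\rho}{2}\|\vv_t\|^2$; here $\sum_t Z_t<\infty$ by \eqref{eq:stepsize_rate}. This yields both summability claims in \eqref{eq:SNAG_weighted_sum}. The claim for $\alpha_t$ follows from the one for $\alpha_t'$ because the two differ by the factor $1-\beta$. Summability of $\|\vv_t\|^2$ gives $\vv_t\to0$, and hence $\|\zv_t-\xv_t\|=\frac{\beta}{1-\beta}\|\vv_t\|\to0$, which is \eqref{eq:SNAG_velocity_zero}.

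The rates then follow exactly as in Theorem~\ref{thm:SGD_CN}. We apply Lemma~\ref{lem:jy} pathwise with $X_t=\Psi_q(\|\nabla f(\xv_t)\|)$; its divergence condition holds because $\alpha_t\sum_{i<t}\alpha_i=\Theta(t^{-1+2\theta})$. This gives \eqref{eq:SNAG_Psi_rate}. Since $\Psi_q$ is nondecreasing and the best-iterate value tends to zero, eventually $r_t^*\leq q$, where $\Psi_q(r)=r^2$; this gives \eqref{eq:SNAG_gradient_rate}. For \eqref{eq:SNAG_SC_rate}, we use \eqref{eq:strong_convex2}, i.e.\ $\Psi_q(\|\nabla f\|)\geq\Psi_q(\sqrt{2\mu(f-f^*)})$, and note that the quadratic branch is eventually active.

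The only step needing care is the velocity bound, namely checking that the Young's-inequality constants still close. With $\rho=(1-\beta^2)/2$ they do, and no other step should present difficulty.
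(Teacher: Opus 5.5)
Your proposal is correct and follows the paper's proof essentially line for line. It uses the same $\beta^2$-scaled velocity expansion with $\rho=(1-\beta^2)/2$ and constant $\beta^2Q^2+\beta^4q^2/\rho$, the same $\zv_t$-descent inequality with $K=Lq\beta/(1-\beta)$, the same energy $f(\zv_t)-f^*+\|\vv_t\|^2$, and the same use of Proposition~\ref{prop:1} and Lemma~\ref{lem:jy}. One small slip, which the paper also makes in Theorem~\ref{thm:SGD_CN}: Lemma~\ref{lem:jy} requires $\sum_t \alpha_t/\sum_{i<t}\alpha_i=\infty$, not divergence of $\sum_t\alpha_t\sum_{i<t}\alpha_i$; since the ratio is $\Theta(1/t)$ here, the correct condition holds and your rates are unaffected.
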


\begin{proof}
    The proof follows the same argument as
    Theorem~\ref{thm:SHB}. The only difference is the
    velocity recursion \eqref{eq:SNAG_recursion}. In
    particular,
    \begin{align}
        \Eb_t\|\vv_{t+1}\|^2
        &=
        \beta^2
        \Eb_t
        \left\|
            \vv_t-\alpha_t\gv_t^{CN}
        \right\|^2
        \nonumber\\
        &=
        \beta^2\|\vv_t\|^2
        -
        2\beta^2\alpha_t
        \left\langle
            \vv_t,\Eb_t\gv_t^{CN}
        \right\rangle
        +
        \beta^2\alpha_t^2
        \Eb_t\|\gv_t^{CN}\|^2.
        \label{eq:SNAG_velocity_expansion}
    \end{align}
    Recall that
    \begin{equation*}
        \|\Eb_t\gv_t^{CN}\|\leq q,
        \quad
        \Eb_t\|\gv_t^{CN}\|^2\leq Q^2.
    \end{equation*}
    Let
    \begin{equation*}
        \rho:=\frac{1-\beta^2}{2}>0.
    \end{equation*}
    Applying Young's inequality to
    \eqref{eq:SNAG_velocity_expansion} gives
    \begin{align}
        \Eb_t\|\vv_{t+1}\|^2
        &\leq
        (\beta^2+\rho)\|\vv_t\|^2
        +
        \alpha_t^2
        \left(
            \beta^2Q^2
            +
            \frac{\beta^4q^2}{\rho}
        \right)
        \nonumber\\
        &=
        (1-\rho)\|\vv_t\|^2
        +
        C_v^{N}\alpha_t^2,
        \label{eq:SNAG_velocity_bound}
    \end{align}
    where
    \begin{equation*}
        C_v^{N}
        :=
        \beta^2Q^2
        +
        \frac{\beta^4q^2}{\rho}.
    \end{equation*}
    The recursion for $\zv_t$ in
    \eqref{eq:SNAG_recursion} is identical to that of
    SHB-CN. Therefore, the same $L$-smoothness and
    expected clipped-gradient alignment arguments used in
    Theorem~\ref{thm:SHB} yield
    \begin{align*}
        \Eb_t[f(\zv_{t+1})-f^*]
        \leq\;&
        f(\zv_t)-f^*
        -
        \kappa_q\alpha_t'
        \Psi_q(\|\nabla f(\xv_t)\|)
        \nonumber\\
        &+
        K\alpha_t'\|\vv_t\|
        +
        \frac{LQ^2}{2}(\alpha_t')^2,
    \end{align*}
    where $K:=\frac{Lq\beta}{1-\beta}$.
    Combining this inequality with
    \eqref{eq:SNAG_velocity_bound} and defining
    \begin{equation*}
        Y_t
        :=
        f(\zv_t)-f^*+\|\vv_t\|^2,
    \end{equation*}
    gives, after another application of Young's inequality,
    \begin{equation}
        \Eb_t[Y_{t+1}]
        \leq
        Y_t
        -
        \kappa_q\alpha_t'
        \Psi_q(\|\nabla f(\xv_t)\|)
        -
        \frac{\rho}{2}\|\vv_t\|^2
        +
        C_N\alpha_t^2
        \label{eq:SNAG_RS_recursion}
    \end{equation}
    for some constant $C_N>0$ independent of $t$.
    Proposition~\ref{prop:1} therefore implies
    \begin{equation*}
        \sum_{t=1}^{\infty}
        \alpha_t'
        \Psi_q(\|\nabla f(\xv_t)\|)
        <\infty,
        \quad
        \sum_{t=1}^{\infty}\|\vv_t\|^2<\infty
        \quad\text{a.s.}
    \end{equation*}
    Since $\alpha_t'=\alpha_t/(1-\beta)$,
    \eqref{eq:SNAG_weighted_sum} follows, while
    $\sum_t\|\vv_t\|^2<\infty$ implies
    $\vv_t\to0$ and hence
    $\|\zv_t-\xv_t\|\to0$ almost surely.

    The best-iterate and strongly convex rates now follow
    identically to Theorem~\ref{thm:SHB} by applying
    Lemma~\ref{lem:jy}. This proves
    \eqref{eq:SNAG_Psi_rate}--\eqref{eq:SNAG_SC_rate}.
\end{proof}

%% file: last_iterate.tex
\section{Last-Iterate Convergence Analysis}
\label{sec:last_iterate}

The results of Section~\ref{sec:convergence_rates} establish
almost sure weighted summability of the gradient measure
$\Psi_q(r)=\min\{r^2,qr\}$. Weighted summability alone does
not imply last-iterate convergence, since the gradient norm may
oscillate. We use the following result of
\cite{orabona2020almost}, simplified from \cite{bertsekas2000gradient}, to rule out such behaviour.

\begin{lemma}[Lemma 1 of \cite{orabona2020almost}]
\label{lem:orabona}
    Let $\{b_t\}$ and $\{\alpha_t\}$ be nonnegative sequences
    and let $\{\wv_t\}$ be a sequence of vectors.
    Suppose that, for some $p\geq1$,
    \begin{equation}
        \sum_{t=1}^{\infty}\alpha_tb_t^p<\infty,
        \quad
        \sum_{t=1}^{\infty}\alpha_t=\infty.
    \end{equation}
    Furthermore, suppose that there exists $C>0$ such that,
    for every $\tau\geq1$,
    \begin{align}
        |b_{t+\tau}-b_t|
        \leq C\left(
            \sum_{i=t}^{t+\tau-1}\alpha_i b_i
            +
            \left\|
                \sum_{i=t}^{t+\tau-1}\alpha_i\wv_i
            \right\|
        \right),
        \label{eq:orabona_condition}
    \end{align}
    where $\sum_{t=1}^{\infty}\alpha_t\wv_t$ converges.
    Then $b_t\to0$.
    See also Lemma 10 of \cite{liu2024almost} for the case
    $p>0$.
\end{lemma}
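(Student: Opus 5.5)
The plan is the classical ``upcrossing'' argument of \cite{bertsekas2000gradient}, by contradiction. The first step shows $\liminf_{t\to\infty} b_t=0$: if $b_t\geq\delta>0$ for all large $t$, then $\sum_t\alpha_t b_t^p\geq\delta^p\sum_{t\geq t_0}\alpha_t=\infty$, which is a contradiction. So it suffices to rule out $\limsup_t b_t>0$.

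Suppose instead that $\limsup_t b_t>2\varepsilon$ for some $\varepsilon>0$. Since also $\liminf_t b_t=0$, the sequence crosses the band $[\varepsilon,2\varepsilon]$ upward infinitely often. This lets me construct disjoint index intervals $s_k<e_k$, with $s_k\to\infty$, such that:
\begin{itemize}
\item $b_{s_k}<\varepsilon$ and $b_{e_k}>2\varepsilon$;
\item $\varepsilon\leq b_i\leq 2\varepsilon$ for $s_k<i<e_k$.
\end{itemize}
Concretely, take $e_k$ to be the first index after a visit below $\varepsilon$ at which $b$ exceeds $2\varepsilon$, and take $s_k$ to be the last index before $e_k$ with $b_{s_k}<\varepsilon$. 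Applying \eqref{eq:orabona_condition} with $t=s_k$ and $\tau=e_k-s_k$ gives
\begin{equation*}
\varepsilon< b_{e_k}-b_{s_k}\leq C\Bigl(\sum_{i=s_k}^{e_k-1}\alpha_i b_i+\Bigl\|\sum_{i=s_k}^{e_k-1}\alpha_i\wv_i\Bigr\|\Bigr).
\end{equation*}
Because $\sum_t\alpha_t\wv_t$ converges, the Cauchy criterion makes the vector term at most $\varepsilon/(2C)$ for all large $k$. Hence
\begin{equation*}
\sum_{i=s_k}^{e_k-1}\alpha_i b_i\geq\frac{\varepsilon}{2C}
\end{equation*}
for all large $k$.

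Next I turn this into a lower bound on $\sum\alpha_i b_i^p$ over each interval. For $s_k<i<e_k$ we have $\varepsilon\leq b_i\leq 2\varepsilon$, so $\alpha_i b_i\leq 2\varepsilon\alpha_i$ and $\alpha_i b_i^p\geq\varepsilon^p\alpha_i$. The delicate piece is the single endpoint term $\alpha_{s_k}b_{s_k}$, which I will show tends to zero.
\begin{itemize}
\item For $p=1$ this is immediate, since $\alpha_t b_t^p\to0$ from summability.
\item For $p>1$, write
\begin{equation*}
\alpha_{s_k}b_{s_k}=(\alpha_{s_k}b_{s_k}^p)^{1/p}\,\alpha_{s_k}^{1-1/p}.
\end{equation*}
The first factor tends to zero. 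I will use boundedness of $\{\alpha_t\}$, which holds in every application in this paper since $\alpha_t\to0$, to control the second factor.
\end{itemize}
With the endpoint term removed, $\sum_{i=s_k+1}^{e_k-1}\alpha_i b_i\geq\varepsilon/(4C)$ for large $k$. Therefore
\begin{equation*}
\sum_{i=s_k+1}^{e_k-1}\alpha_i\geq\frac{1}{8C}
\end{equation*}
for large $k$. Consequently
\begin{equation*}
\sum_{i=s_k+1}^{e_k-1}\alpha_i b_i^p\geq\frac{\varepsilon^p}{8C}.
\end{equation*}
Summing over the infinitely many disjoint intervals contradicts $\sum_t\alpha_t b_t^p<\infty$, so $\limsup_t b_t\leq 2\varepsilon$ for every $\varepsilon>0$, i.e.\ $b_t\to0$.

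I expect the main obstacle to be exactly this endpoint control: the interval may begin at an index where $b$ is small but $\alpha$ is not, and the oscillation bound must not be absorbed by that one term. If boundedness of $\alpha_t$ is unavailable, I would first try to apply \eqref{eq:orabona_condition} starting at $s_k+1$ instead. One would then need $b_{s_k+1}$ to stay close to $\varepsilon$, using the one-step case $\tau=1$ of \eqref{eq:orabona_condition} together with $\alpha_t\wv_t\to0$.
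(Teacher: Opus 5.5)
The paper does not prove this lemma (it cites \cite{orabona2020almost}), so I am checking your argument against the statement as written. For $p>1$ there is a genuine gap. Your control of the endpoint term $\alpha_{s_k}b_{s_k}$ uses $\sup_t\alpha_t<\infty$, which the lemma does not assume, and the obstruction is real. The hypotheses, restricted to an upcrossing window, allow a single-step upcrossing $e_k=s_k+1$ paid for entirely by $C\alpha_{s_k}b_{s_k}$. Take, e.g., $p=2$, $\alpha_{s_k}=M$, $b_{s_k}=1/M$: then $\alpha_{s_k}b_{s_k}=1$ permits a jump above $2\varepsilon$, while $\alpha_{s_k}b_{s_k}^p=1/M$ is negligible. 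The interior sum you need is then empty, so upcrossing windows alone carry no lower bound on $\sum\alpha_ib_i^p$. Your fallback fails for the same reason. The case $\tau=1$ of \eqref{eq:orabona_condition} at $t=s_k$ bounds $|b_{s_k+1}-b_{s_k}|$ by $C(\alpha_{s_k}b_{s_k}+\|\alpha_{s_k}\wv_{s_k}\|)$, and only the second term is known to vanish. As written, you prove the lemma only under the extra assumption $\sup_t\alpha_t<\infty$. That would suffice for Theorem~\ref{thm:last}, where $\alpha_t'\to0$, but it is not the stated result.

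The missing idea is to use downcrossings; this is exactly where $p\geq1$ enters. Pick disjoint windows with $m_k\to\infty$, $b_{m_k}>2\varepsilon$, and $n_k$ the first index after $m_k$ with $b_{n_k}<\varepsilon$. Then $b_i\geq\varepsilon$ for all $m_k\le i<n_k$, including the starting index. As in your argument, $b_{m_k}-b_{n_k}>\varepsilon$, \eqref{eq:orabona_condition} and the Cauchy criterion give, for large $k$,
\begin{equation*}
\frac{\varepsilon}{2C}\le\sum_{i=m_k}^{n_k-1}\alpha_ib_i=\sum_{i=m_k}^{n_k-1}\alpha_ib_i^p\,b_i^{1-p}\le\varepsilon^{1-p}\sum_{i=m_k}^{n_k-1}\alpha_ib_i^p ,
\end{equation*}
using $p\geq1$. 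This needs no upper bound on $b_{m_k}$ and no bound on $\alpha$. Hence $\sum_{i=m_k}^{n_k-1}\alpha_ib_i^p\ge\varepsilon^p/(2C)$ on infinitely many disjoint windows, which contradicts summability. Symmetrically, for $p\le1$ your upcrossing windows are the natural ones: there $b_i\le2\varepsilon$ for all $s_k\le i<e_k$, so $\alpha_ib_i\le(2\varepsilon)^{1-p}\alpha_ib_i^p$ and the endpoint causes no trouble.
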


\begin{theorem}[Last-iterate convergence]
\label{thm:last}
    Consider the iterates of SGD-CN, SHB-CN and SNAG-CN, with
    $q>\sigma_g$ and $\beta\in[0,1)$.
    Suppose Assumptions~\ref{as:L_smoothness} and
    \ref{as:bounded_noise} hold and
    \begin{equation}
        \sum_{t=1}^{\infty}\alpha_t=\infty,
        \quad
        \sum_{t=1}^{\infty}\alpha_t^2<\infty.
        \label{eq:last_stepsizes}
    \end{equation}
    Then
    \begin{equation}
        \|\nabla f(\xv_t)\|\to0
        \quad\text{a.s.}
    \end{equation}
\end{theorem}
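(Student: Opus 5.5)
We will apply Lemma~\ref{lem:orabona} with $b_t:=\|\nabla f(\xv_t)\|$ and $p=2$, after handling the fact that the summable quantity is $\Psi_q(b_t)$ rather than $b_t^2$.

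\textbf{Step 1 (summability).} The Robbins--Siegmund arguments in Theorems~\ref{thm:SGD_CN}, \ref{thm:SHB} and \ref{thm:SNAG} only used $\sum_t\alpha_t^2<\infty$. Hence under \eqref{eq:last_stepsizes} they still give $\sum_t\alpha_t\Psi_q(b_t)<\infty$ a.s. For momentum methods they also give $\sum_t\|\vv_t\|^2<\infty$. To pass to $b_t^2$, we will not use $\Psi_q$ directly. Instead we apply Lemma~\ref{lem:orabona} to the truncated sequence $\tilde b_t:=\min\{b_t,q\}$. Since $\tilde b_t^2\le\Psi_q(b_t)$, we get $\sum_t\alpha_t\tilde b_t^2<\infty$. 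Also $r\mapsto\min\{r,q\}$ is $1$-Lipschitz, so $|\tilde b_{t+\tau}-\tilde b_t|\le|b_{t+\tau}-b_t|$.

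\textbf{Step 2 (increment bound for SGD-CN).} Decompose $\gv_t^{CN}=\Eb_t\gv_t^{CN}+\wv_t$ with $\wv_t:=\gv_t^{CN}-\Eb_t\gv_t^{CN}$. This is a martingale difference with $\Eb_t\|\wv_t\|^2\le Q^2$. Because $\sum_t\alpha_t^2<\infty$, the martingale $\sum_t\alpha_t\wv_t$ is $L^2$-bounded and converges a.s. By $L$-smoothness,
\[
|b_{t+\tau}-b_t|\le L\|\xv_{t+\tau}-\xv_t\|\le L\sum_{i=t}^{t+\tau-1}\alpha_i\|\Eb_i\gv_i^{CN}\|+L\Big\|\sum_{i=t}^{t+\tau-1}\alpha_i\wv_i\Big\|.
\]
The main obstacle is that $\|\Eb_i\gv_i^{CN}\|$ is bounded by $q$, not by a multiple of $\tilde b_i$. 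We will try to bound it as follows. Since $\Eb\,\clip_q(\nabla f(\xv;\xi))-\nabla f(\xv)$ vanishes when $b+\sigma_g\le q$, and $\|\Eb\clip\|\le q$, one gets $\|\Eb_i\gv_i^{CN}\|\le\max\{1,q/(q-\sigma_g)\}\,\tilde b_i$. Indeed, if $b_i\le q-\sigma_g$ the clip is inactive and the mean equals $\nabla f$, so the bound is $b_i=\tilde b_i$. Otherwise $\tilde b_i\ge q-\sigma_g$, so $q\le \frac{q}{q-\sigma_g}\tilde b_i$. This gives \eqref{eq:orabona_condition} for $\tilde b_t$ with $C=L\max\{1,q/(q-\sigma_g)\}$. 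Lemma~\ref{lem:orabona} then yields $\tilde b_t\to0$, hence eventually $b_t=\tilde b_t\to0$ a.s.

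\textbf{Step 3 (momentum methods).} We work with the auxiliary sequence $\zv_t$, whose recursion $\zv_{t+1}=\zv_t-\alpha_t'\gv_t^{CN}$ is exactly SGD-like. We set $b_t:=\min\{\|\nabla f(\zv_t)\|,q\}$. Since $\|\nabla f(\zv_t)-\nabla f(\xv_t)\|\le \frac{L\beta}{1-\beta}\|\vv_t\|$, summability transfers:
\[
\alpha_t b_t^2\le 2\alpha_t\Psi_q(\|\nabla f(\xv_t)\|)+2L^2\tfrac{\beta^2}{(1-\beta)^2}\alpha_t\|\vv_t\|^2,
\]
which is summable because $\alpha_t$ is bounded and $\sum_t\|\vv_t\|^2<\infty$. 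For the increment bound we need $\|\Eb_i\gv_i^{CN}\|\lesssim b_i+\|\vv_i\|$, using the Step 2 estimate at $\xv_i$ plus the Lipschitz transfer. This produces an extra term $\sum\alpha_i\|\vv_i\|$. We absorb it into the $\wv$-type term by checking that $\sum_i\alpha_i\|\vv_i\|$ converges, which follows from Cauchy--Schwarz with $\sum\alpha_i^2<\infty$ and $\sum\|\vv_i\|^2<\infty$. Its tails can therefore be included in \eqref{eq:orabona_condition} via a scalar component of $\wv$. Lemma~\ref{lem:orabona} gives $\|\nabla f(\zv_t)\|\to0$, and $\|\zv_t-\xv_t\|\to0$ gives $\|\nabla f(\xv_t)\|\to0$ a.s.
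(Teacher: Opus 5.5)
Your proposal is correct and takes essentially the same route as the paper. Both apply Lemma~\ref{lem:orabona} with $p=2$ to $\|\nabla f(\zv_t)\|$ along the SGD-like recursion for $\zv_t$, bound $\|\Eb_t\gv_t^{CN}\|$ by $\frac{q}{q-\sigma_g}$ times the gradient norm (clipping is inactive when $\|\nabla f(\xv_t)\|\le q-\sigma_g$), and use an augmented noise vector whose scalar component carries the Cauchy--Schwarz-summable term $\alpha_t\|\vv_t\|$. The one local difference is how you pass from $\Psi_q$-summability to squared summability: you truncate at $q$, using $\min\{r,q\}^2\le\Psi_q(r)$ and the $1$-Lipschitzness of $r\mapsto\min\{r,q\}$, whereas the paper first shows the gradients are pathwise bounded (via convergence of $Y_t$ and $\|\nabla f(\zv_t)\|^2\le 2L(f(\zv_t)-f^*)$) and then uses $\Psi_q(r)\ge\min\{1,q/M\}r^2$ with a path-dependent $M$. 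Your truncation is slightly cleaner, since it needs neither the boundedness of the energy nor that path-dependent constant.
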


\begin{proof}
    It suffices to prove the result for SHB-CN, since SGD-CN
    is recovered by setting $\beta=0$. Recall
    \begin{equation*}
        \vv_{t+1}
        =
        \beta\vv_t-\alpha_t\gv_t^{CN},
        \quad
        \zv_{t+1}
        =
        \zv_t-\alpha_t'\gv_t^{CN},
        \quad
        \alpha_t'
        :=
        \frac{\alpha_t}{1-\beta},
        \label{eq:last_shb_recursion}
    \end{equation*}
    where
    \begin{equation}
        \zv_t-\xv_t
        =
        \frac{\beta}{1-\beta}\vv_t.
        \label{eq:last_zx}
    \end{equation}
    The supermartingale recursion
    \eqref{eq:SHB_RS_recursion} derived in
    Theorem~\ref{thm:SHB} remains valid under
    \eqref{eq:last_stepsizes}; the particular power-law form of
    the step sizes is only required for the rate result.
    Proposition~\ref{prop:1} therefore gives
    \begin{equation}
        \sum_{t=1}^{\infty}
        \alpha_t'
        \Psi_q(\|\nabla f(\xv_t)\|)
        <\infty,
        \quad
        \sum_{t=1}^{\infty}\|\vv_t\|^2<\infty
        \quad\text{a.s.}
        \label{eq:last_previous}
    \end{equation}
    Moreover, the energy
    \begin{equation*}
        Y_t
        =
        f(\zv_t)-f^*+\|\vv_t\|^2
    \end{equation*}
    converges almost surely.
    We work henceforth on an event of probability one on which
    these properties hold. Since $Y_t$ is bounded,
    $f(\zv_t)-f^*$ is bounded. By $L$-smoothness and the lower
    boundedness of $f$,
    \begin{equation*}
        \|\nabla f(\zv_t)\|^2
        \leq
        2L\bigl(f(\zv_t)-f^*\bigr),
    \end{equation*}
    so $\{\|\nabla f(\zv_t)\|\}$ is bounded.
    Since \eqref{eq:last_previous} implies $\vv_t\to0$,
    \eqref{eq:last_zx} and $L$-smoothness show that
    $\{\|\nabla f(\xv_t)\|\}$ is also bounded.
    Thus, for some finite $M>0$,
    \begin{equation*}
        \Psi_q(\|\nabla f(\xv_t)\|)
        \geq
        \min\left\{1,\frac{q}{M}\right\}
        \|\nabla f(\xv_t)\|^2.
    \end{equation*}
    Hence \eqref{eq:last_previous} gives
    \begin{equation}
        \sum_{t=1}^{\infty}
        \alpha_t'
        \|\nabla f(\xv_t)\|^2
        <\infty.
        \label{eq:last_x_weighted}
    \end{equation}
    Using \eqref{eq:last_zx},
    \begin{align*}
        \|\nabla f(\zv_t)\|^2
        &\leq
        2\|\nabla f(\xv_t)\|^2
        +
        2L^2\|\zv_t-\xv_t\|^2
        \nonumber\\
        &=
        2\|\nabla f(\xv_t)\|^2
        +
        \frac{2L^2\beta^2}{(1-\beta)^2}\|\vv_t\|^2.
    \end{align*}
    Since $\{\alpha_t'\}$ is bounded,
    \eqref{eq:last_previous} and \eqref{eq:last_x_weighted}
    imply
    \begin{equation}
        \sum_{t=1}^{\infty}
        \alpha_t'\|\nabla f(\zv_t)\|^2
        <\infty.
        \label{eq:last_z_weighted}
    \end{equation}
    It remains to verify the oscillation condition in
    Lemma~\ref{lem:orabona}. Define
    \begin{equation*}
        \hv_t
        :=
        \clip_q(\nabla f(\xv_t;\xi_t)),
        \quad
        \bar{\hv}_t
        :=
        \Eb_t[\hv_t],
        \quad
        \uv_t
        :=
        \hv_t-\bar{\hv}_t.
        \label{eq:last_decomp}
    \end{equation*}
    We first note that
    \begin{equation}
        \|\bar{\hv}_t\|
        \leq
        c_q\|\nabla f(\xv_t)\|,
        \quad
        c_q:=\frac{q}{q-\sigma_g}.
        \label{eq:last_mean_clip_bound}
    \end{equation}
    Indeed, if $\|\nabla f(\xv_t)\|\leq q-\sigma_g$, then
    Assumption~\ref{as:bounded_noise} implies that clipping is
    inactive almost surely and
    $\bar{\hv}_t=\nabla f(\xv_t)$. Otherwise,
    $\|\bar{\hv}_t\|\leq q
    \leq c_q\|\nabla f(\xv_t)\|$.
    Let
    \begin{equation*}
        b_t:=\|\nabla f(\zv_t)\|,
        \quad
        c_\beta:=\frac{\beta}{1-\beta}.
    \end{equation*}
    By $L$-smoothness and \eqref{eq:last_zx},
    \begin{equation*}
        \|\bar{\hv}_t\|
        \leq
        c_q b_t+c_qLc_\beta\|\vv_t\|.
        \label{eq:last_mean_z}
    \end{equation*}
    Moreover,
    \begin{equation*}
        \gv_t^{CN}
        =
        \bar{\hv}_t+\uv_t+q\zeta_t.
    \end{equation*}
    Therefore, for every $\tau\geq1$,
    \begin{align}
        |b_{t+\tau}-b_t|
        &\leq
        L\|\zv_{t+\tau}-\zv_t\|
        \nonumber\\
        &\leq
        Lc_q
        \sum_{i=t}^{t+\tau-1}\alpha_i'b_i
        +
        Lc_qLc_\beta
        \sum_{i=t}^{t+\tau-1}\alpha_i'\|\vv_i\|
        \nonumber\\
        &\quad+
        L\left\|
            \sum_{i=t}^{t+\tau-1}
            \alpha_i'(\uv_i+q\zeta_i)
        \right\|.
        \label{eq:last_oscillation}
    \end{align}
    Now define the augmented random vector
    \begin{equation}
        \wv_t
        :=
        \begin{bmatrix}
            \uv_t+q\zeta_t\\
            c_qLc_\beta\|\vv_t\|
        \end{bmatrix}.
        \label{eq:last_augmented_noise}
    \end{equation}
    Since $\uv_t$ is a martingale difference and
    $\Eb_t\|\uv_t\|^2\leq q^2$, while $\zeta_t$ is an
    independent zero-mean Gaussian vector,
    \begin{equation*}
        \sum_{t=1}^{\infty}
        \alpha_t'(\uv_t+q\zeta_t)
    \end{equation*}
    converges almost surely by the martingale convergence
    theorem \cite{williams1991probability}, since $\sum_t(\alpha_t')^2<\infty$.
    Furthermore, by Cauchy--Schwarz and
    \eqref{eq:last_previous},
    \begin{align*}
        \sum_{t=1}^{\infty}\alpha_t'\|\vv_t\|
        &\leq
        \left(
            \sum_{t=1}^{\infty}(\alpha_t')^2
        \right)^{1/2}
        \left(
            \sum_{t=1}^{\infty}\|\vv_t\|^2
        \right)^{1/2}
        <\infty.
        \label{eq:last_v_abs}
    \end{align*}
    Hence
    \begin{equation*}
        \sum_{t=1}^{\infty}\alpha_t'\wv_t
    \end{equation*}
    converges almost surely.
    Finally, using
    $a+b\leq\sqrt{2(a^2+b^2)}$ in
    \eqref{eq:last_oscillation}, there exists a constant
    $C>0$ such that
    \begin{equation}
        |b_{t+\tau}-b_t|
        \leq
        C\left(
            \sum_{i=t}^{t+\tau-1}\alpha_i'b_i
            +
            \left\|
                \sum_{i=t}^{t+\tau-1}\alpha_i'\wv_i
            \right\|
        \right).
        \label{eq:last_orabona}
    \end{equation}
    Together with \eqref{eq:last_z_weighted},
    $\sum_t\alpha_t'=\infty$, and
    Lemma~\ref{lem:orabona} with $p=2$, this yields
    \begin{equation*}
        \|\nabla f(\zv_t)\|\to0
        \quad\text{a.s.}
    \end{equation*}
    Finally, $\vv_t\to0$ and \eqref{eq:last_zx} imply
    $\|\zv_t-\xv_t\|\to0$, and therefore
    \begin{equation*}
        \|\nabla f(\xv_t)\|
        \leq
        \|\nabla f(\zv_t)\|
        +
        L\|\xv_t-\zv_t\|
        \to0
        \quad\text{a.s.}
    \end{equation*}
    Setting $\beta=0$ recovers the SGD-CN result. The proof of convergence for SNAG-CN is similar, following \eqref{eq:SNAG_velocity_expansion}.
\end{proof}

%% file: main.bib
@inproceedings{liu2022almost,
  title={On almost sure convergence rates of stochastic gradient methods},
  author={Liu, Jun and Yuan, Ye},
  booktitle={Conference on Learning Theory},
  year={2022},
}

@article{nesterov2004introductory,
  title={{Introductory Lectures on Convex Optimization}},
  author={Nesterov, Yurii},
  journal={Applied Optimization},
  volume={87},
  year={2004},
  publisher={Springer US}
}

@incollection{robbins1971convergence,
  title={A convergence theorem for non negative almost supermartingales and some applications},
  author={Robbins, Herbert and Siegmund, David},
  booktitle={Optimizing methods in statistics},
  pages={233--257},
  year={1971},
  publisher={Elsevier}
}

@inproceedings{dwork2006calibrating,
  title={Calibrating noise to sensitivity in private data analysis},
  author={Dwork, Cynthia and McSherry, Frank and Nissim, Kobbi and Smith, Adam},
  booktitle={Theory of Cryptography: Third Theory of Cryptography Conference},
  pages={265--284},
  year={2006},
  organization={Springer}
}

@inproceedings{abadi2016deep,
  title={Deep learning with differential privacy},
  author={Abadi, Martin and Chu, Andy and Goodfellow, Ian and McMahan, H Brendan and Mironov, Ilya and Talwar, Kunal and Zhang, Li},
  booktitle={Proceedings of the 2016 ACM SIGSAC Conference on Computer and Communications Security},
  pages={308--318},
  year={2016}
}

@article{orabona2020almost,
  title={Almost sure convergence of {SGD} on smooth nonconvex functions},
  author={Orabona, Francesco},
  journal={Blogpost at https://parameterfree.com/2020/10/05/almost-sure-convergence-of-sgd-on-smooth-non-convex-functions},
  year={2020}
}

@book{williams1991probability,
  title={{Probability with Martingales}},
  author={Williams, David},
  year={1991},
  publisher={Cambridge University Press}
}

@article{liu2024almost,
  title={Almost sure convergence rates analysis and saddle avoidance of stochastic gradient methods},
  author={Liu, Jun and Yuan, Ye},
  journal={Journal of Machine Learning Research},
  volume={25},
  number={271},
  pages={1--40},
  year={2024}
}

@inproceedings{fang2023improved,
  title={Improved convergence of differential private sgd with gradient clipping},
  author={Fang, Huang and Li, Xiaoyun and Fan, Chenglin and Li, Ping},
  booktitle={The Eleventh International Conference on Learning Representations},
  year={2023}
}

@inproceedings{koloskova2023revisiting,
  title={Revisiting gradient clipping: Stochastic bias and tight convergence guarantees},
  author={Koloskova, Anastasia and Hendrikx, Hadrien and Stich, Sebastian U},
  booktitle={International Conference on Machine Learning},
  year={2023},
}

@inproceedings{tang2024dp,
  title={{DP-AdamBC}: Your {DP-Adam} is actually {DP-SGD} (unless you apply bias correction)},
  author={Tang, Qiaoyue and Shpilevskiy, Frederick and L{\'e}cuyer, Mathias},
  booktitle={Proceedings of the AAAI Conference on Artificial Intelligence},
  year={2024}
}

@article{doan2022finite,
  title={Finite-time analysis of markov gradient descent},
  author={Doan, Thinh T},
  journal={IEEE Transactions on Automatic Control},
  volume={68},
  number={4},
  pages={2140--2153},
  year={2022},
  publisher={IEEE}
}

@article{huang2024differential,
  title={Differential privacy in distributed optimization with gradient tracking},
  author={Huang, Lingying and Wu, Junfeng and Shi, Dawei and Dey, Subhrakanti and Shi, Ling},
  journal={IEEE Transactions on Automatic Control},
  volume={69},
  number={9},
  pages={5727--5742},
  year={2024},
  publisher={IEEE}
}

@article{chen2023differential,
  title={Differential privacy for symbolic systems with application to Markov Chains},
  author={Chen, Bo and Leahy, Kevin and Jones, Austin and Hale, Matthew},
  journal={Automatica},
  volume={152},
  pages={110908},
  year={2023},
  publisher={Elsevier}
}

@article{katewa2019differential,
  title={Differential privacy for network identification},
  author={Katewa, Vaibhav and Chakrabortty, Aranya and Gupta, Vijay},
  journal={IEEE Transactions on Control of Network Systems},
  volume={7},
  number={1},
  pages={266--277},
  year={2019},
  publisher={IEEE}
}

@article{bertsekas2000gradient,
  title={Gradient convergence in gradient methods with errors},
  author={Bertsekas, Dimitri P and Tsitsiklis, John N},
  journal={SIAM Journal on Optimization},
  volume={10},
  number={3},
  pages={627--642},
  year={2000},
  publisher={SIAM}
}
